\def\namedlabel#1#2{\begingroup
    #2%
    \def\@currentlabel{#2}%
    \phantomsection\label{#1}\endgroup
}
\newtheorem*{rep@theorem}{\rep@title}
\newcommand{\newreptheorem}[2]{%
\newenvironment{rep#1}[1]{%
 \def\rep@title{#2 \ref{##1}}%
 \begin{rep@theorem}}%
 {\end{rep@theorem}}}
\newtheorem{theorem}{Theorem}
\newtheorem{definition}{Definition}
\DeclareMathOperator*{\argmin}{\mathrm{arg\,min}}
\DeclareMathOperator*{\Argmin}{\mathrm{Arg\,min}}
\DeclareMathOperator*{\E}{\mathbb{E}}
\title{Adaptive Divergence for Rapid Adversarial Optimization}
\author[1]{Maxim~Borisyak}
\author[1]{Tatiana~Gaintseva}
\author[1, 2, 3]{Andrey~Ustyuzhanin}%
\affil[1]{Laboratory of Methods for Big Data Analysis, National Research University Higher School of Economics, 20 Myasnitskaya st, Moscow 101000, Russia}
\affil[2]{Physics department, Imperial College, South Kensington, London SW7 2AZ, United Kingdom}
\affil[3]{Department of multidisciplinary research, National University of Science and Technology MISIS, 4 Leninsky av, Moscow 119049, Russia}
\begin{abstract}
    Adversarial Optimization (AO) provides a reliable, practical way to match two implicitly defined distributions, one of which is usually represented by a sample of real data, and the other is defined by a generator. Typically, AO involves training of a high-capacity model on each step of the optimization. In this work, we consider computationally heavy generators, for which training of high-capacity models is associated with substantial computational costs. To address this problem, we introduce a novel family of divergences, which varies the capacity of the underlying model, and allows for a significant acceleration with respect to the number of samples drawn from the generator.

    We demonstrate the performance of the proposed divergences on several tasks, including tuning parameters of a physics simulator, namely, Pythia event generator.
\end{abstract}
\begin{document}

\flushbottom
\maketitle
\thispagestyle{empty}

\section{Introduction}

Adversarial Optimization (AO), introduced in Generative Adversarial Networks~\citep{goodfellow2014generative}, became popular in many areas of machine learning and beyond with applications ranging from generative~\citep{radford2015unsupervised} and inference tasks~\citep{dumoulin2016adversarially}, improving image quality~\citep{isola2017image} to tuning stochastic computer simulations~\citep{avo}.

AO provides a reliable, practical way to match two implicitly defined distributions, one of which is typically represented by a sample of real data, and the other is represented by a parameterized generator. Matching of the distributions is achieved by minimizing a divergence between these distribution, and estimation of the divergence involves a secondary optimization task, which, typically, requires training a model to discriminate between these distributions. The model is referred to as discriminator or critic (for simplicity, we use term discriminator everywhere below).

Training a high-capacity model, however, is computationally expensive~\citep{metz2016unrolled} as each step of divergence minimization is accompanied by fitting the discriminator; therefore, adversarial training often requires significantly more computational resources than, for example, a classification model with a comparable architecture of the networks \footnote{For instance, compare training times, network capacities and computational resources reported by \cite{simonyan2014very} and \cite{choi2018stargan}.}. Nevertheless, in conventional settings like GAN, this problem is not pronounced for at least two reasons. Firstly, the generator is usually represented by a deep neural network, and sampling is computationally cheap; thus, for properly training the discriminator, a training sample of sufficient size can be quickly drawn. Secondly, GAN training procedures are often regarded not as minimization of a divergence, but as game-like dynamics~\citep{li2017towards, mescheder2018training}; such dynamics typically employ gradient optimization with small incremental steps, which involve relatively small sample sizes for adapting previous discriminator to an updated generator configuration.

Computational costs of AO becomes significant when sampling from the generator is computationally expensive, or optimization procedure does not operate by performing small incremental steps~\citep{metz2016unrolled}. One of the practical examples of such settings is fine-tuning parameters of complex computer simulations. Such simulators are usually based on physics laws expressed in computational mathematical forms like differential or stochastic equations. Those equations relate input or initial conditions to the observable quantities under conditions of parameters that define physics laws, geometry or other valuable property of the simulation; these parameters do not depend on inputs or initial conditions. It is not uncommon that such simulations have very high computational complexity. For example, the simulation of a single proton collision event in the CERN ATLAS detector takes several minutes on a single core CPU~\citep{bouhova2010atlas}. Due to typically high dimensionality, it takes a considerable amount of samples for fine-tuning, which in turn increases the computational burden.

Another essential property of such computer simulations is the lack of gradient information over the simulation parameters. Computations are represented by sophisticated computer programs, which are challenging to differentiate\footnote{There are ways to estimate gradients of such programs, for example, see~\citep{baydin2018efficient}. However, all methods known to the authors require training a surrogate, which encounters the problem of the expensive sampling procedures mentioned above.}. Thus, global black-box optimization methods are often employed; Bayesian Optimization is one of the most popular approaches.

In this work, we introduce a novel family of divergences which enables faster optimization convergence measured by the number of samples drawn from the generator.  The variation of the underlying discriminator model capacity during optimization leads to a significant speed-up. The proposed divergence family suggests using low-capacity models to compare distant distributions (typically, at early optimization steps), and the capacity gradually grows as the distributions become closer to each other. Thus it allows for a significant acceleration of the initial stages of optimization. Additionally, the proposed family of divergences is broad, which offers a wide range of opportunities for further research.

We demonstrate the basic idea with some toy examples, and with a real challenge of tuning Pythia event generator \citep{pythia64, pythia82} following \cite{avo}~and~\cite{ilten2017event}. We consider physics-related simulations; nevertheless, all proposed methods are simulation-agnostic.

\section{Background}
Adversarial Optimization, initially introduced for Generative Adversarial Networks (GAN)~\citep{goodfellow2014generative}, offers a general strategy for matching two distributions. Consider feature space $\mathcal{X}$, ground-truth distribution $P$ and parametrized family of distributions $Q_\psi$ implicitly defined by a generator with parameters $\psi$. Formally, we wish to find such $\psi^*$, that $P = Q_{\psi^*}$ almost everywhere. AO achieves that by minimizing a divergence or a distance between $P$ and $Q_\psi$ with respect to $\psi$. One of the most popular divergences is Jensen-Shannon divergence:
\begin{multline}
    \mathrm{JSD}(P, Q_\psi) = \frac{1}{2}\left[ \mathrm{KL}(P \| M_\psi) + \mathrm{KL}(Q_\psi \| M_\psi) \right] =\\ \log 2 - \min_{f \in \mathcal{F}} \left[ -\frac{1}{2}\E_{x \sim P} \log f(x) - \frac{1}{2}\E_{x \sim Q_\psi} \log(1 - f(x)) \right] =\\ \log 2 - \min_{f \in \mathcal{F}} L(f, P, Q_\psi) \label{eq:jsd};
\end{multline}
where: $\mathrm{KL}$ --- Kullback-Leibler divergence, $M_\psi(x) = \frac{1}{2}\left( P(x) + Q_\psi(x)\right)$, $L$ --- cross-entropy loss function, and $\mathcal{F} = \{ f : \mathcal{X} \to [0, 1] \}$ is the set of all possible discriminators. Expression~\eqref{eq:jsd} provides a practical way to estimate Jensen-Shannon divergence by training a powerful discriminator to distinguish samples from $P$ against samples from $Q_\psi$.

In classical GAN optimization, both generator and discriminator are represented by differentiable neural networks. Hence, a subgradient of $\mathrm{JSD}(P, Q_\psi)$ can be easily computed~\citep{goodfellow2014generative}. The minimization of the divergence can be performed by a gradient method, and the optimization procedure goes iteratively following those steps:
\begin{itemize}
    \item using parameters of the discriminator from the previous iteration as an initial guess, adjust $f$ by performing several steps of the gradient descent to minimize $\mathcal{L}(f, P, Q_\psi)$;
    \item considering $f$ as a constant, compute the gradient of $\mathcal{L}(f, P, Q_\psi)$ by $\psi$, perform one step of the gradient ascent.
\end{itemize}


For computationally heavy generators, gradients are usually practically unfeasible; therefore, we consider black-box optimization methods. One of the most promising methods for black-box AO is Adversarial Variational Optimization~\citep{avo}, which combines AO with Variational Optimization~\citep{wierstra2014natural}. This method improves upon conventional Variational Optimization (VO) over Jensen-Shannon divergence by training a single discriminator to distinguish samples from ground-truth distribution and samples from a mixture of generators, where the mixture is defined by the search distribution of VO. This eliminates the need to train a classifier for each individual set of parameters drawn from the search distribution.

Bayesian Optimization (BO)~\citep{mockus2012bayesian} is another commonly used black-box optimization method, with applications including tuning of complex simulations~\citep{ilten2017event}. As we demonstrate in section 5, BO can be successfully applied for Adversarial Optimization.

\section{Adaptive Divergence}

Notice, that in Equation~\eqref{eq:jsd}, minimization is carried over the set of all possible discriminators $\mathcal{F} = \{f : \mathcal{X} \mapsto [0, 1] \}$. In practice, this is intractable and set $\mathcal{F}$ is approximated by a model such as Deep Neural Networks.
Everywhere below, we use terms 'low-capacity' and 'high-capacity' to describe the set of feasible discriminator functions: low-capacity models are either represent a narrow set of functions (e.g., logistic regression, shallow decision trees) or are heavily regularized (see Section~\ref{sec:impl} for more examples of capacity regulation); high-capacity models are sufficient for estimating $\mathrm{JSD}$ for an Adversarial Optimization problem under consideration.

In conventional GAN settings, the generator is represented by a neural network, sampling is computationally cheap, and usage of high-capacity discriminators is satisfactory. In our case, as was discussed above, simulations tend to be computationally heavy, which, combined with a typically slow convergence of black-box optimization algorithms, might make AO with a high-capacity model practically intractable.

The choice of the model has its trade-off: high-capacity models provide good estimations of $\mathrm{JSD}$, but, generally, require large sample sizes to be properly trained. In contrast, low-capacity models tend to require fewer samples for training; however, they might provide biased estimations. For example, if the classifier is represented by a narrow set of functions $M \subseteq \mathcal{F}$, then quantity:
\begin{equation}
    D_M(P, Q) = \log 2 - \min_{f \in M} L(f, P, Q);\label{eq:model-pd}
\end{equation}
might no longer be a divergence, so we refer to it as \textit{pseudo-divergence}.

\begin{definition}
    \label{def:pseudo-divergence}
    A function $D : \Pi(\mathcal{X}) \times \Pi(\mathcal{X}) \to \mathbb{R}$ is a pseudo-divergence, if:
    \begin{description}
        \item [\namedlabel{prop:pd-non-negativity}{(\textbf{P1})}] $\forall P, Q \in \Pi(\mathcal{X}): D(P, Q) \geq 0$;
        \item [\namedlabel{prop:pd-equality}{(\textbf{P2})}] $\forall P, Q \in \Pi(\mathcal{X}): (P = Q) \Rightarrow D(P, Q) = 0$;
    \end{description}
    where $\Pi(\mathcal{X})$ --- set of all probability distributions on space $\mathcal{X}$.
\end{definition}

It is tempting to use a pseudo-divergence $D_M$ produced by a low-capacity model $M$ for Adversarial Optimization, however, a pseudo-divergence might not guarantee proper convergence as there might exist such $\psi \in \Psi$, that $\mathrm{JSD}(P, Q_\psi) > 0$, while $\mathrm{D}(P, Q_\psi) = 0$. For example, naive Bayes classifier is unable to distinguish between $P$ and $Q$ that have the same marginal distributions. Nevertheless, if model $M$ is capable of distinguishing between $P$ and some $Q_\psi$, $D_M$ still provides information about the position of the optimal parameters in the configuration space $\psi^*$ by narrowing search volume, \citet{ilten2017event}~offers a good demonstration of this statement.


The core idea of this work is to replace Jensen-Shannon divergence with a so-called adaptive divergence that gradually adjusts model capacity depending on the 'difficulty' of the classification problem with the most 'difficult' problem being distinguishing between two equal distributions. Formally, this gradual increase of model complexity can be captured by the following definitions.

\begin{definition}
    \label{def:family}
    A family of pseudo-divergences $\mathcal{D} = \{ D_\alpha : \Pi(\mathcal{X}) \times \Pi(\mathcal{X}) \to \mathbb{R} \mid \alpha \in [0, 1] \}$ is ordered and complete with respect to Jensen-Shannon divergence if:
    \begin{description}
        \item [\namedlabel{prop:family-pd}{(D0)}] $D_\alpha$ is a pseudo-divergence for all $\alpha \in [0, 1]$;
        \item [\namedlabel{prop:family-ordered}{(D1)}] $\forall P, Q \in \Pi(\mathcal{X}): \forall 0 \leq \alpha_1 < \alpha_2 \leq 1: D_{\alpha_1}(P, Q) \leq D_{\alpha_2}(P, Q)$;
        \item [\namedlabel{prop:family-complete}{(D2)}] $\forall P, Q \in \Pi(\mathcal{X}): D_1(P, Q) = \mathrm{JSD}(P, Q)$.
    \end{description}
\end{definition}

There are numerous ways to construct a complete and ordered w.r.t. JSD family of pseudo-divergences. In the context of Adversarial Optimization, we consider the following three methods. The simplest one is to define a nested family of models $\mathcal{M} = \{ M_\alpha \subseteq \mathcal{F} \mid \alpha \in [0, 1] \}$, (e.g., by changing number of hidden units of a neural network), then use pseudo-divergence~\eqref{eq:model-pd} to form a desired family.

Alternatively, for a parameterized model $M = \{ f(\theta, \cdot) \mid \theta \in \Theta \}$, one can use a regularization $R(\theta)$ to control 'capacity' of the model:
\begin{eqnarray}
    D_\alpha(P, Q) &=& \log 2 - L(f(\theta^*, \cdot), P, Q); \label{eq:reg-pd}\\
    \theta^* &=& \argmin_{\theta \in \Theta}L(f(\theta, \cdot), P, Q) + c(1 - \alpha) \cdot R(\theta); \nonumber
\end{eqnarray}
where $c: [0, 1] \to [0, +\infty)$ is a strictly increasing function and $c(0) = 0$.

The third, boosting-based method is applicable for a discrete approximation:
\begin{eqnarray}
    D_{c(i)}(P, Q) &=& \log 2 - L(F_i, P, Q) \label{eq:pd-boost};\\
    F_i &=& F_{i - 1} + \rho \cdot \argmin_{f \in B} L(F_{i - 1} + f, P, Q) \nonumber;\\
    F_0 &\equiv& \frac{1}{2} \nonumber;
\end{eqnarray}
where: $\rho$ --- learning rate, $B$ --- base estimator, $c: \mathbb{Z}_+ \to [0, 1]$ --- a strictly increasing function for mapping ensemble size onto $\alpha \in [0, 1]$.

Although Definition~\ref{def:family} is quite general, in this paper, we focus on families of pseudo-divergence produced in the manner similar to the examples above. All these examples introduce a classification algorithm parameterized by $\alpha$, then define pseudo-divergences $D_\alpha$ by substituting the optimal discriminator in Equation~\eqref{eq:jsd} with the discriminator trained in accordance with this classification algorithm with the parameter $\alpha$.
Of course, one has to make sure that the resulting family of pseudo-divergences is ordered and complete w.r.t. Jensen-Shannon divergence.
Appendix~\ref{sec:app-def} provides formal definitions and proofs for the examples above.

With this class of pseudo-divergences in mind, we refer to $\alpha$ as capacity of the pseudo-divergence $D_\alpha \in \mathcal{D}$ relative to the family $\mathcal{D}$, or simply as capacity if the family $\mathcal{D}$ is clear from the context. In the examples above, capacity of pseudo-divergence is directly linked to the capacity of underlying discriminator models: to the size of the model in equation~\eqref{eq:model-pd}, to the strength of the regularization in equation~\eqref{eq:reg-pd} (which, similar to the previous case, effectively restricts the size of the set of feasible models) or to the size of the ensemble for a boosting-based family of divergences in equation~\eqref{eq:pd-boost}.

Finally, we introduce a function that combines a family of pseudo-divergences into a single divergence.

\begin{definition}
    \label{def:ad}
    If a family of pseudo-divergences $\mathcal{D} = \{ D_\alpha \mid \alpha \in [0, 1] \}$ is ordered and complete  with respect to Jensen-Shannon divergence, then adaptive divergence $\mathrm{AD}_\mathcal{D}$ produced by $\mathcal{D}$ is defined as:
    \begin{equation}
        \mathrm{AD}_\mathcal{D}(P, Q) = \inf \left\{ D_\alpha(P, Q) \mid \mathrm{D}_\alpha(P, Q) \geq (1 - \alpha) \log 2\right\}. \label{eq:ad}
    \end{equation}
\end{definition}
We omit index in $\mathrm{AD}_\mathcal{D}$ when the family $\mathcal{D}$ is clear from the context or is not important.

Note, that due to property~(\textbf{D1}), $\mathrm{D}_\alpha(P, Q)$ is a non-decreasing function of $\alpha$, while $(1 - \alpha) \log 2$ is a strictly decreasing one. Hence, if family $\mathcal{D}$ is such that for any two distributions $P$ and $Q$ $\mathrm{D}_\alpha(P, Q)$ is continuous w.r.t. $\alpha$, equation~\eqref{eq:ad} can be simplified: 
\begin{equation}
    \mathrm{AD}_\mathcal{D}(P, Q) = \mathrm{D}_{\alpha^*}(P, Q),
\end{equation}
where $\alpha^*$ is the root of the following equation:
\begin{equation}
    D_{\alpha}(P, Q) = (1 - \alpha)\log 2.
\end{equation}
A general procedure for computing $\mathrm{AD}_\mathcal{D}$ for this case is outlined in algorithm~\ref{algo:ad}.

Intuitively, an adaptive divergence $\mathrm{AD}_\mathcal{D}$ switches between members of $\mathcal{D}$ depending on the `difficulty' of separating $P$ and $Q$. For example, consider family $\mathcal{D}$ produced by equation~\eqref{eq:reg-pd} with a high-capacity neural network as model $M$ and $l_2$ regularization $R$ on its weights. For a pair of distant $P$ and $Q$, even a highly regularized network is capable of achieving low cross-entropy loss and, therefore, $\mathrm{AD}_\mathcal{D}$ takes values of the pseudo-divergence based on such network. As distribution $Q$ moves close to $P$, $\mathrm{AD}_\mathcal{D}$ lowers the regularization coefficient, effectively increasing the capacity of the underlying model.

\begin{algorithm}[tbp]
    \caption{General procedure for computing an adaptive divergence by grid search}
    \label{algo:ad}
    \begin{algorithmic}
        \Require
        $\mathcal{D} = \{ D_\alpha \mid \alpha \in [0, 1] \}$ --- ordered and complete w.r.t. Jensen-Shannon divergence family of pseudo-divergences; $\varepsilon$ --- tolerance; $P$, $Q$ --- input distributions
        \State $\alpha \leftarrow 0$;
        
        \While{$D_\alpha(P, Q) < (1 - \alpha) \log 2$}
            \State $\alpha \leftarrow \alpha + \varepsilon$
        \EndWhile
        \State \Return{$D_\alpha(P, Q)$}
    \end{algorithmic}
\end{algorithm}

The idea behind adaptive divergences can be viewed from a different angle. Given two distributions $P$ and $Q$, it scans producing family of pseudo-divergences, starting from $\alpha = 0$ (the least powerful pseudo-divergence), and if some pseudo-divergence reports high enough value, it serves as a `proof' of differences between $P$ and $Q$. If all pseudo-divergences from the family $\mathcal{D}$ report $0$, then $P$ and $Q$ are equal almost everywhere as the family always includes $\mathrm{JSD}$ as a member. Formally, this intuition can be expressed with the following theorem.

\begin{reptheorem}{th:divergence}
    If $\mathrm{AD}_\mathcal{D}$ is an adaptive divergence produced by a ordered and complete  with respect to Jensen-Shannon divergence family of pseudo-divergences $\mathcal{D}$, then for any two distributions $P$ and $Q$: $\mathrm{JSD}(P, Q) = 0$ if and only if  $\mathrm{AD}(P, Q) = 0$.
\end{reptheorem}

A formal proof of Theorem~\ref{th:divergence} can be found in Appendix~\ref{sec:proof}. Combined with the observation that $\mathrm{AD}(P, Q) \geq 0$ regardless of $P$ and $Q$, the theorem states that $\mathrm{AD}$ is a divergence in the same sense as $\mathrm{JSD}$. This, in turn, allows the use of adaptive divergences as a replacement for Jensen-Shannon divergence in Adversarial Optimization.

As can be seen from the definition, adaptive divergences are designed to utilize low-capacity pseudo-divergences (with underlying low-capacity models) whenever it is possible: for a pair of distant $P$ and $Q$ one needs to train only a low-capacity model to estimate $\mathrm{AD}$, using the most powerful model only to prove equality of distributions. As low-capacity models generally require fewer samples for training, $\mathrm{AD}$ allows an optimization algorithm to run for more iterations within the same time restrictions.

Properties of $\mathrm{AD}_\mathcal{D}$ highly depend on the family $\mathcal{D}$, and choice of the latter might either negatively or positively impact convergence of a particular optimization algorithm. Figure~\ref{fig:examples} demonstrates both cases: here, we evaluate $\mathrm{JSD}$ and four variants of $\mathrm{AD}_\mathcal{D}$ on two synthetic examples. In each example, the generator produces a rotated version of the ground-truth distribution and is parameterized by the angle of rotation (ground-truth distributions and examples of generator distributions are shown in Fig.~\ref{fig:roll-example}~and~Fig.~\ref{fig:xor-example}). In Fig.~\ref{fig:roll-gbdt}~and~Fig.~\ref{fig:roll-nn} $\mathrm{AD}$ shows behaviour similar to that of $\mathrm{JSD}$ (both being monotonous and maintaining a significant slope in the respective ranges). In Fig.~\ref{fig:xor-gbdt}, both variants of $\mathrm{AD}$ introduce an additional local minimum, which is expected to impact convergence of gradient-based algorithms negatively. In contrast, in Fig.~\ref{fig:xor-nn} neural-network-based $\mathrm{AD}$ with $l_2$ regularization stays monotonous in the range $[0, \pi / 2]$ and keeps a noticeable positive slope, in contrast to saturated $\mathrm{JSD}$. The positive slope is expected to improve convergence of gradient-based algorithms and, possibly, some variants of Bayesian Optimization. In contrast, neural-network-based $\mathrm{AD}$ with dropout regularization behaves in a manner similar to adaptive divergences in Fig.~\ref{fig:xor-gbdt}.

\begin{figure}
    \centering
    \begin{subfigure}[b]{0.32\textwidth}
        \includegraphics[width=\textwidth]{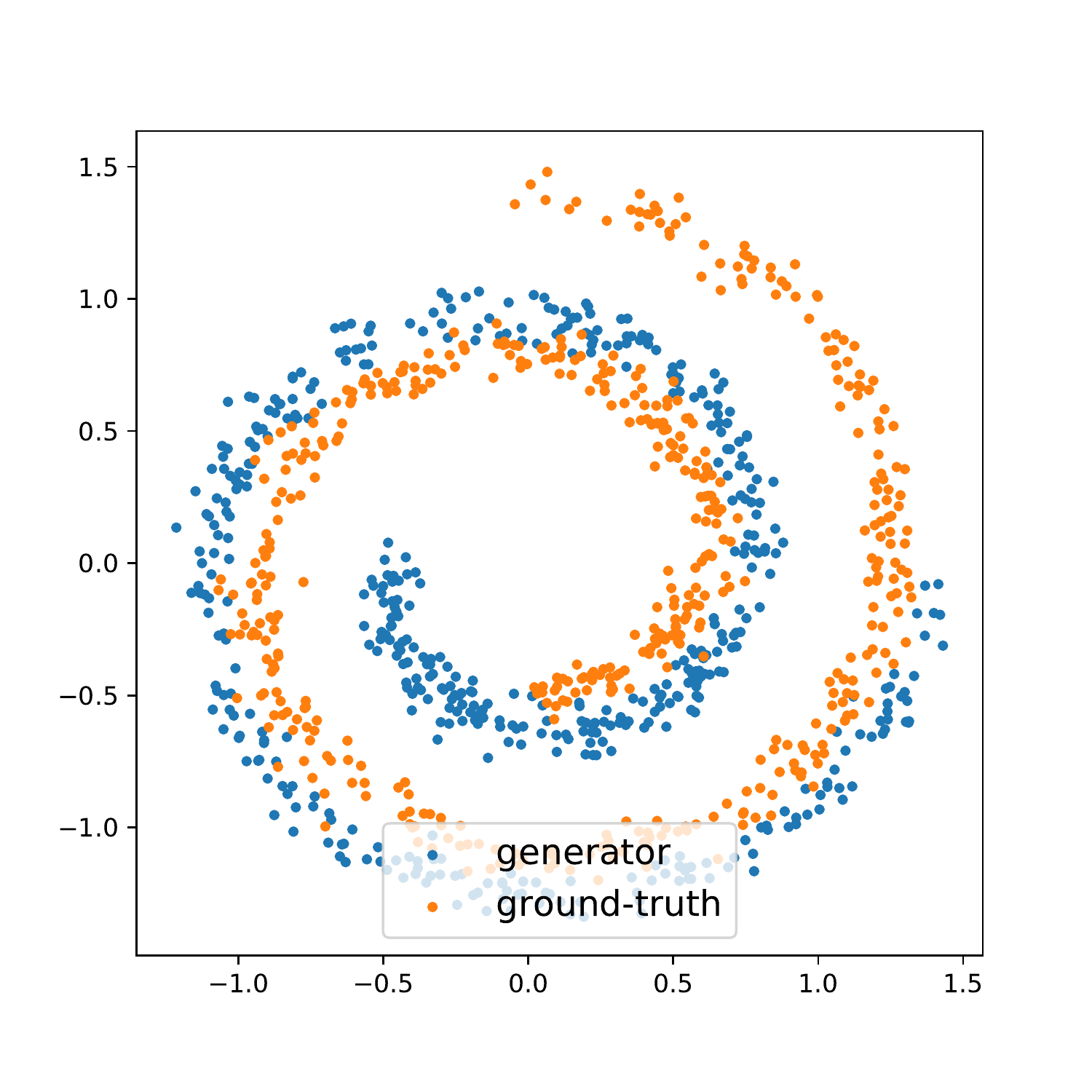}
        \caption{Example configuration.}
        \label{fig:roll-example}
    \end{subfigure}
    ~
    \begin{subfigure}[b]{0.32\textwidth}
        \includegraphics[width=\textwidth]{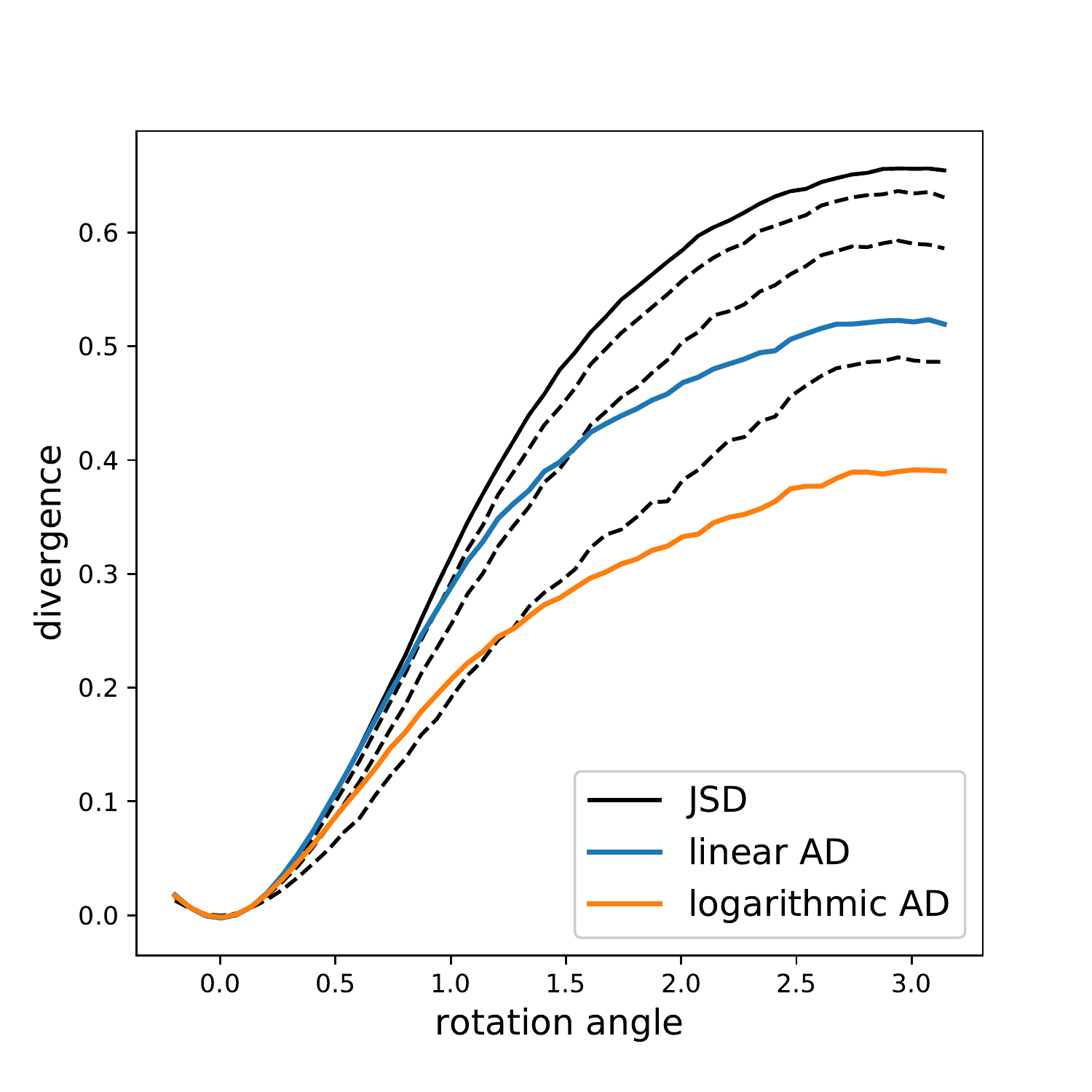}
        \caption{Gradient Boosting.}
        \label{fig:roll-gbdt}
    \end{subfigure}
    ~
    \begin{subfigure}[b]{0.32\textwidth}
        \includegraphics[width=\textwidth]{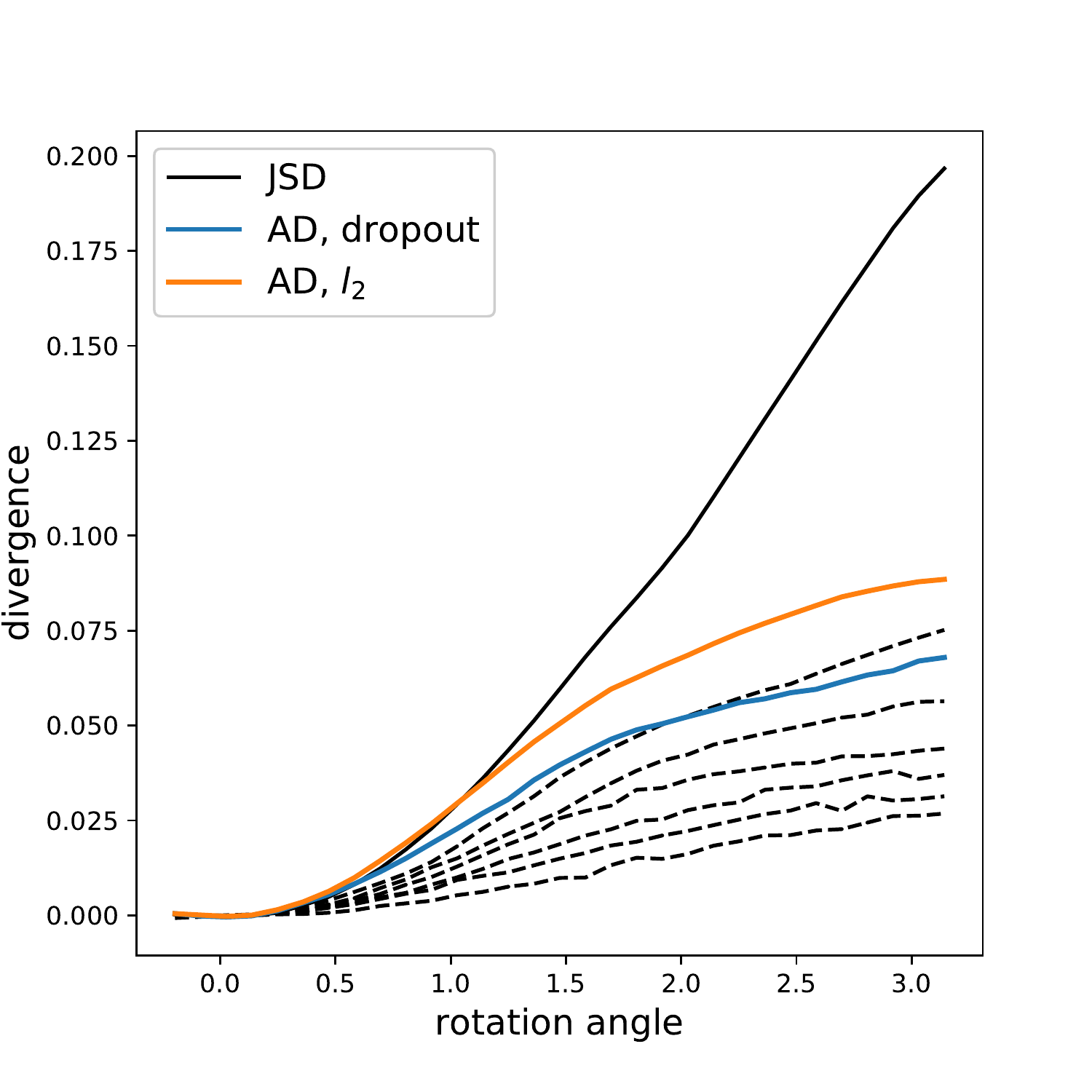}
        \caption{Neural Networks.}
        \label{fig:roll-nn}
    \end{subfigure}
    
    \begin{subfigure}[b]{0.32\textwidth}
        \includegraphics[width=\textwidth]{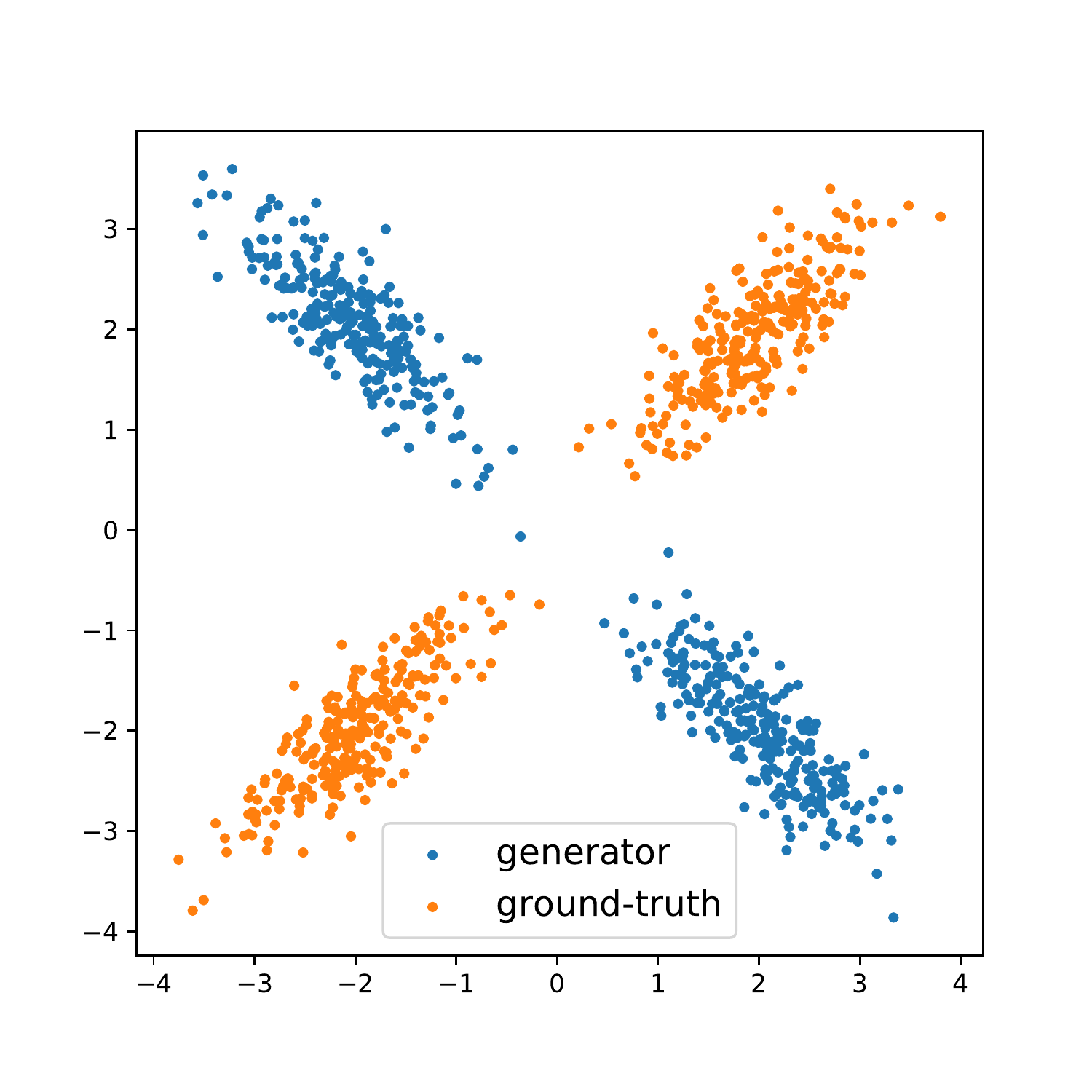}
        \caption{Example configuration.}
        \label{fig:xor-example}
    \end{subfigure}
    ~
    \begin{subfigure}[b]{0.32\textwidth}
        \includegraphics[width=\textwidth]{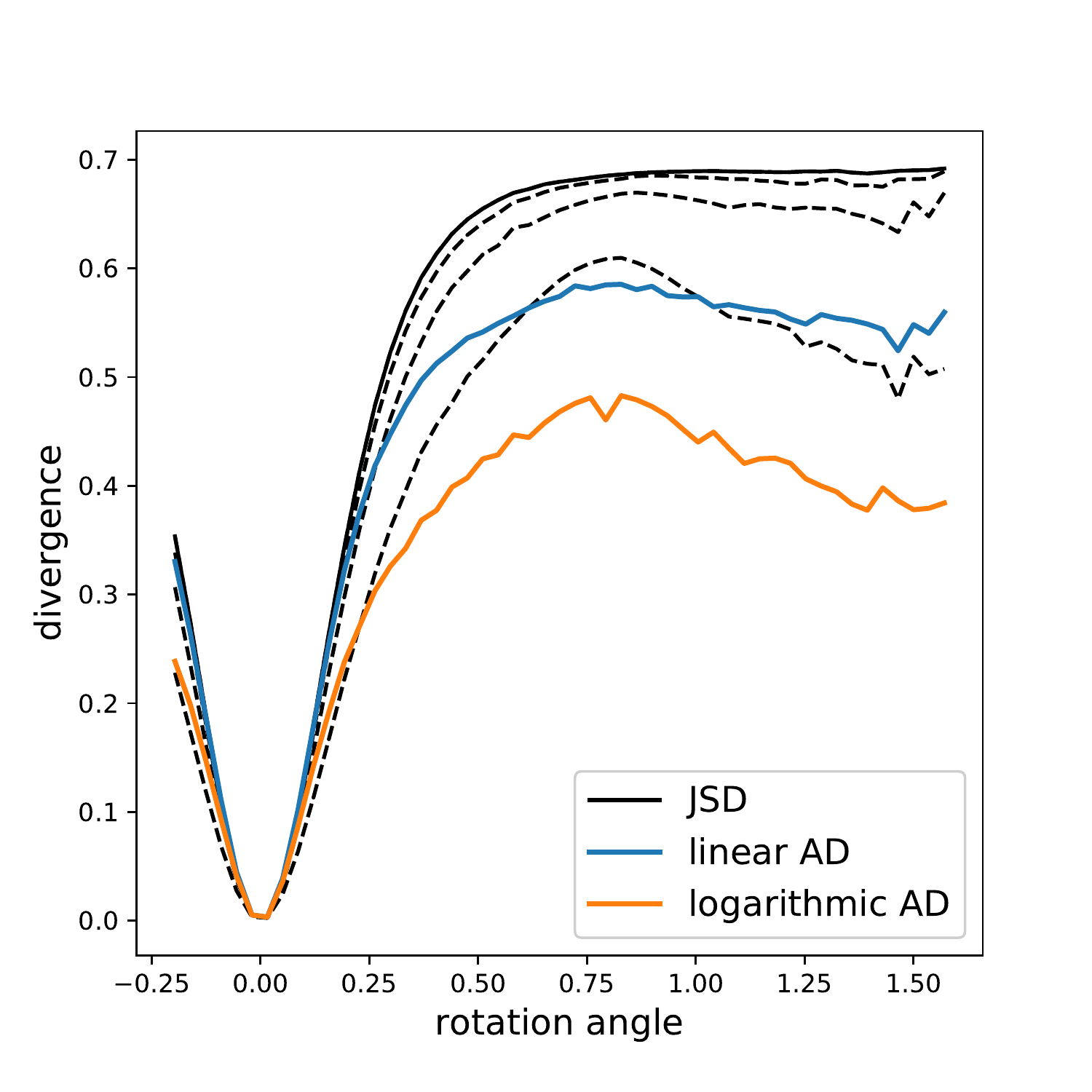}
        \caption{Gradient Boosting.}
        \label{fig:xor-gbdt}
    \end{subfigure}
    ~
    \begin{subfigure}[b]{0.32\textwidth}
        \includegraphics[width=\textwidth]{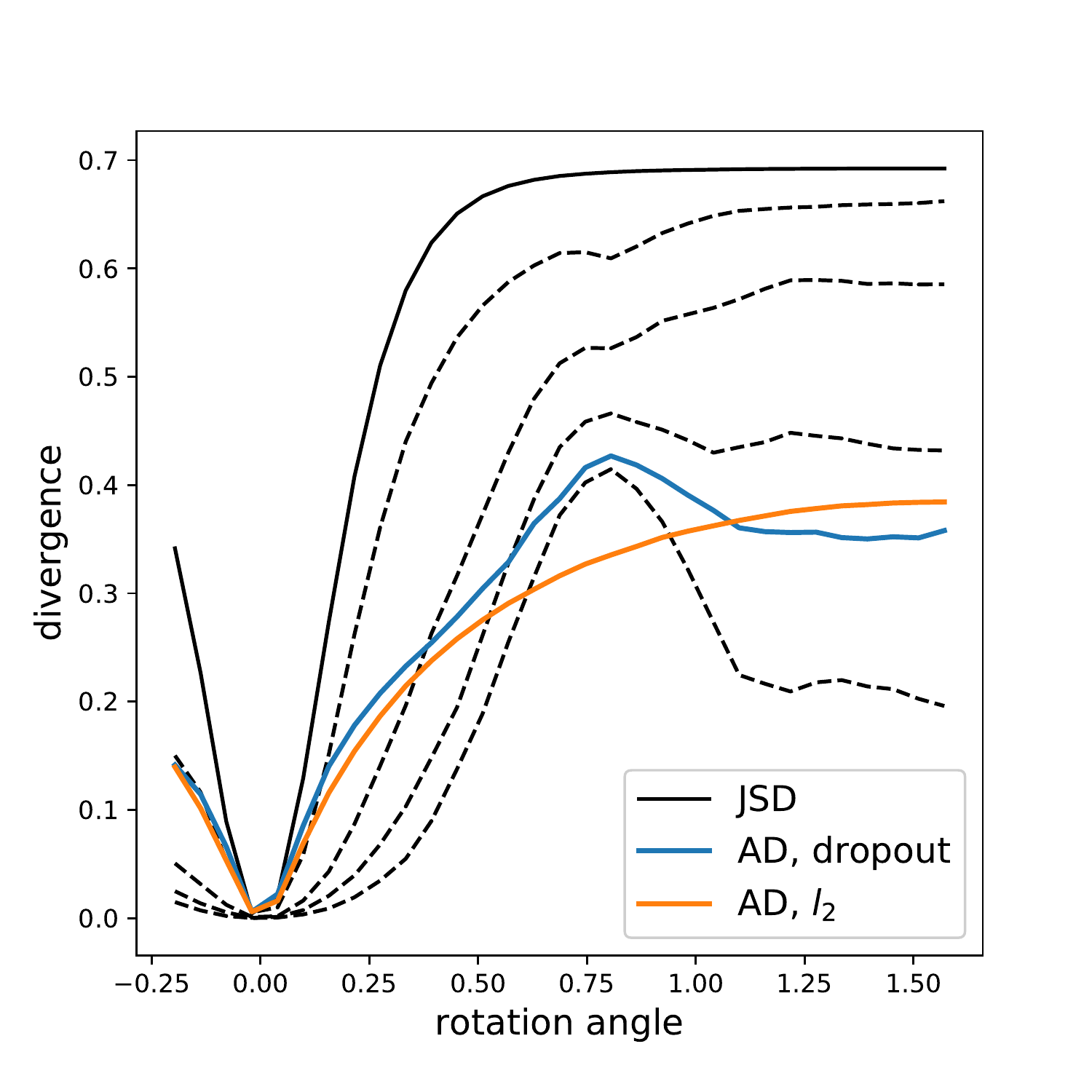}
        \caption{Neural Networks.}
        \label{fig:xor-nn}
    \end{subfigure}
    
    \caption{
        Synthetic examples.
        (a)~and~(d):~ground-truth distributions and example configurations of generators.
        Both generators are rotated versions of the corresponding ground-truth distributions.
        (b)~and~(e):~JSD --- Jensen-Shannon divergences estimated by Gradient Boosted Decision Trees with 500 trees of depth~3~(b), 100 trees of depth~3~(e); linear AD and logarithmic AD --- adaptive divergences based on  the same models as JSD with linear and logarithmic capacity functions, dashed lines represent some pseudo-divergences from the families producing adaptive divergences. (c)~and~(f):~JSD --- Jensen-Shannon divergences estimated by fully-connected Neural Networks with one hidden layer with 64~units~(c) and 32~units~(f); AD, dropout and AD, $l_2$ --- adaptive divergences based on the same architectures as the one for JSD, with dropout and $l_2$ regularizations; dashed line represent some of pseudo-divergences from the dropout-produced family.
        See section~\ref{sec:impl} for the implementation details.
    }
    \label{fig:examples}
\end{figure}


\section{Implementation}
\label{sec:impl}

A general algorithm for computing an adaptive divergence is presented in Algorithm~\ref{algo:ad}. This algorithm might be an expensive procedure as the algorithm probes multiple pseudo-divergences, and for each of these probes, generally, a model needs to be trained from scratch. However, two of the most commonly used machine learning models, boosting-based methods~\citep{friedman2001greedy} and Neural Networks, allow for more efficient estimation algorithms due to the iterative nature of training procedures for such models.

\subsection{Gradient Boosted Decision Trees}

Gradient Boosted Decision Trees~\citep{friedman2001greedy} (GBDT) and, generally, boosting-based methods, being ensemble methods, intrinsically produce an ordered and complete with respect to Jensen-Shannon divergence family of pseudo-divergences in the manner similar to equation~\eqref{eq:pd-boost}.
This allows for an efficient $\mathrm{AD}$ estimation procedure shown by algorithm~\ref{algo:ad-gbdt}. Here, the number of base estimators serves as capacity of pseudo-divergences, and mapping to $\alpha \in [0, 1]$ is defined through an increasing capacity function $c: \mathbb{Z}_+ \to [0, 1]$ \footnote{Technically, this function should be extended on $[0, +\infty)$ to be in agreement with definition~2.}.

\begin{algorithm}[tbp]
    \caption{Boosted adaptive divergence}
    \label{algo:ad-gbdt}
    \begin{algorithmic}
        \Require
            $X_P$, $X_Q$ --- samples from distributions $P$ and $Q$,
            $B$ --- base estimator training algorithm,
            $N$ --- maximal size of the ensemble,
            $c : \mathbb{Z}_+ \to [0, 1]$ --- capacity function;
            $\rho$ --- learning rate;

        \State $F_0 \leftarrow 1 / 2$
        \State $i \leftarrow 0$
        \State $L_0 \leftarrow \log 2$
        \For{$i = 1, \dots, N$}
            \If{$L_i > c(i) \log 2$}
                \State $F_{i + 1} \leftarrow F_i + \rho \cdot B(F_{i}, X_P, X_Q)$
                \State $L_{i + 1} \leftarrow L(F_{i + 1}, X_P, X_Q)$
                \State $i \leftarrow i + 1$
            \Else
                \State \Return{$\log 2 - L_i$}
            \EndIf
        \EndFor
        
        \State \Return{$\log 2 - L_N$}
    \end{algorithmic}
\end{algorithm}

In our experiments, for ensembles of maximal size $N$, we use the following capacity functions:
\begin{eqnarray}
    \text{linear capacity: } & c(i)& = c_0 \frac{i}{N}; \label{eq:gbdt-lincapacity}\\
    \text{logarithmic capacity: } & c(i)& = c_0 \frac{\log (i + 1)}{\log (N + 1)} \label{eq:gbdt-logcapacity}.
\end{eqnarray}

Notice, however, that Equation~\eqref{eq:pd-boost} defines a discrete variant of $\mathrm{AD}$, which most certainly will result in a discontinuous function\footnote{Note, that introducing a continuous approximation of the ensemble by, for example, varying learning rate for the last base estimator in the current ensemble from $0$ to $\rho$, eliminates discontinuity of $\mathrm{AD}$.}. This effect can be seen on Fig.~\ref{fig:xor-gbdt}.

\subsection{Neural Networks}

There is a number of ways to regulate the capacity of a neural network, in this work, we use well-established dropout regularization with rescaling of layers' outputs (similar to weight rescaling suggested by~\cite{srivastava2014dropout}). It is clear that setting dropout probability $p$ to 0 results in an unregularized network, while $p = 1$ effectively restricts classifier to a constant output and intermediate values of $p$ produce models in between these extreme cases. Additionally, we examine $l_2$ regularization on network weights. We use a linear capacity function for dropout regularization: $c(\alpha) = 1 - \alpha$, and a logarithmic one for $l_2$ regularization: $c(\alpha) = -\log(\alpha)$.

In our experiments, we observe that unregularized networks require significantly more samples to be properly trained than regularized ones. We suggest to use additional independent regularization, in this work, following~\cite{avo} we use gradient regularization $R_1$ suggested by~\cite{mescheder2018training}. Note, that a discriminator trained with such regularization no longer produces JSD estimations. Nevertheless, the resulting function is a proper divergence~\citep{mescheder2018training}, and all results in this work still hold with respect to such divergences.

To produce a family of pseudo-divergences, the proposed algorithm varies the strength of the regularization depending on the current values of the cross-entropy. The values of the loss function are estimated with exponential moving average over losses on mini-batches during iterations of Stochastic Gradient Descent, with the idea that, for slowly changing loss estimations and small enough learning rate, network training should converge~\citep{liu2018darts}. We find that initializing exponential moving average with $\log 2$, which corresponds to the absent regularization, works best. The proposed procedure is outlined in algorithms~\ref{algo:ad-nn-dropout}~and~\ref{algo:ad-nn-reg}.

\begin{algorithm}[ht]
    \caption{Adaptive divergence estimation by a dropout-regularized neural network}
    \label{algo:ad-nn-dropout}
    \begin{algorithmic}
        \Require $X_P$, $X_Q$ --- samples from distributions $P$ and $Q$;\\
            $f_\theta : \mathcal{X} \times \mathbb{R} \to \mathbb{R}$ --- neural network with parameters~$\theta \in \Theta$, the second argument represents dropout probability and is zero if unspecified;
            $c$~---~capacity function;\\
            $\rho$~---~exponential average coefficient;\\
            $\beta$~---~coefficient for $R_1$ regularization;\\
            $\gamma$~---~learning rate of SGD.\\

        \State $L_\mathrm{acc} \leftarrow \log 2$
    
        \While{not converged}
            \State $x_P \leftarrow \text{sample}(X_P)$
            \State $x_Q \leftarrow \text{sample}(X_Q)$
            \State $\zeta \leftarrow c\left(1 - \frac{L_\mathrm{acc}}{\log 2}\right)$
            \State $g_0 \leftarrow \nabla_\theta L(f_\theta(\cdot, \zeta), x_P, x_Q)$
            \State $g_1 \leftarrow \nabla_\theta \|\nabla_\theta f_\theta(x_P)\|^2$
            \State $L_\mathrm{acc} \leftarrow \rho \cdot L_\mathrm{acc} + (1 - \rho) \cdot L(f_\theta, x_P, x_Q)$
            \State $\theta \leftarrow \theta - \gamma \left(g_0 + \beta g_1 \right)$
        \EndWhile
        \State \Return $\log 2 - L(f_\theta, X_P, X_Q)$
    \end{algorithmic}
\end{algorithm}

\begin{algorithm}[ht]
    \caption{Adaptive divergence estimation by a regularized neural network}
    \label{algo:ad-nn-reg}
    \begin{algorithmic}
        \Require $X_P$, $X_Q$ --- samples from distributions $P$ and $Q$;\\
            $f_\theta : \mathcal{X} \to \mathbb{R}$ --- neural network with parameters~$\theta \in \Theta$;\\
            $R: \Theta \to \mathbb{R}$ --- regularization function;
            $c$~---~capacity function;\\
            $\rho$~---~exponential average coefficient;\\
            $\beta$~---~coefficient for $R_1$ regularization;\\
            $\gamma$~---~learning rate of SGD.\\

        \State $L_\mathrm{acc} \leftarrow \log 2$
    
        \While{not converged}
            \State $x_P \leftarrow \text{sample}(X_P)$
            \State $x_Q \leftarrow \text{sample}(X_Q)$
            \State $\zeta \leftarrow c\left(1 - \frac{L_\mathrm{acc}}{\log 2}\right)$
            \State $g_0 \leftarrow \nabla_\theta \left[ L(f_\theta, x_P, x_Q) + \zeta \cdot R(f_\theta) \right]$
            \State $g_1 \leftarrow \nabla_\theta \|\nabla_\theta f_\theta(x_P)\|^2$
            \State $L_\mathrm{acc} \leftarrow \rho \cdot L_\mathrm{acc} + (1 - \rho) \cdot L(f_\theta, x_P, x_Q)$
            \State $\theta \leftarrow \theta - \gamma \left(g_0 + \beta g_1 \right)$
        \EndWhile
        \State \Return $\log 2 - L(f_\theta, X_P, X_Q)$
    \end{algorithmic}
\end{algorithm}

\section{Experiments}

We evaluate the performance of adaptive divergences with two black-box optimization algorithms, namely Bayesian Optimization and Adversarial Variational Optimization\footnote{Code of the experiments is available at \url{https://github.com/HSE-LAMBDA/rapid-ao/}}. As computational resources spent by simulators are of our primary concern, we measure convergence of Adversarial Optimization with respect to the number of samples generated by the simulation.
Each task is presented by a parametrized generator, 'real-world' samples are drawn from the same generator with some nominal parameters. Optimization algorithms are expected to converge to these nominal parameters.

To measure the number of samples required to estimate a divergence, we search for the minimal number of samples such that the difference between train and validation losses is within $10^{-2}$ for Gradient Boosted Decision Trees and $5 \cdot 10^{-2}$ for Neural Networks\footnotemark. As a significant number of samples is involved in loss estimation, for simplicity, we ignore uncertainties associated with finite sample sizes. For GBDT, we utilize a bisection root-finding routine to reduce time spent on retraining classifiers; however, for more computationally expensive simulators, it is advised to gradually increase the size of the training set until the criterion is met.

\footnotetext{This procedure requires generating additional validation set of the size similar to that of the training set, which might be avoided by, e.g., using Bayesian inference, or cross-validation estimates.}

As the performance of Bayesian Optimization is influenced by choice of the initial points (in our experiments, 5 points uniformly drawn from the search space), each experiment involving Bayesian Optimization is repeated 100 times, and aggregated results are reported.

\subsection{XOR-like synthetic data}

This task repeats one of the synthetic examples presented in Fig.~\ref{fig:xor-example}: ground truth distribution is an equal mixture of two Gaussian distributions, the generator produces a rotated version of the ground-truth distribution with the angle of rotation being the single parameter of the generator. The main goal of this example is to demonstrate that, despite significant changes in the shape of the divergence, global optimization algorithms, like Bayesian Optimization, can still benefit from the fast estimation procedures offered by adaptive divergences.

For this task, we use an adaptive divergence based on Gradient Boosted Decision Trees (100 trees with the maximal depth of 3) with linear and logarithmic capacity functions given by Equations~\eqref{eq:gbdt-lincapacity}~and~\eqref{eq:gbdt-logcapacity} and $c_0~=~1/4$. Gaussian Process Bayesian Optimization with Matern kernel ($\nu = 3 / 2$ and scaling from $[10^{-3}, 10^{3}]$ automatically adjusted by Maximum Likelihood fit) is employed as optimizer.

\begin{figure}[h]
    \centering
    \includegraphics[width=\textwidth]{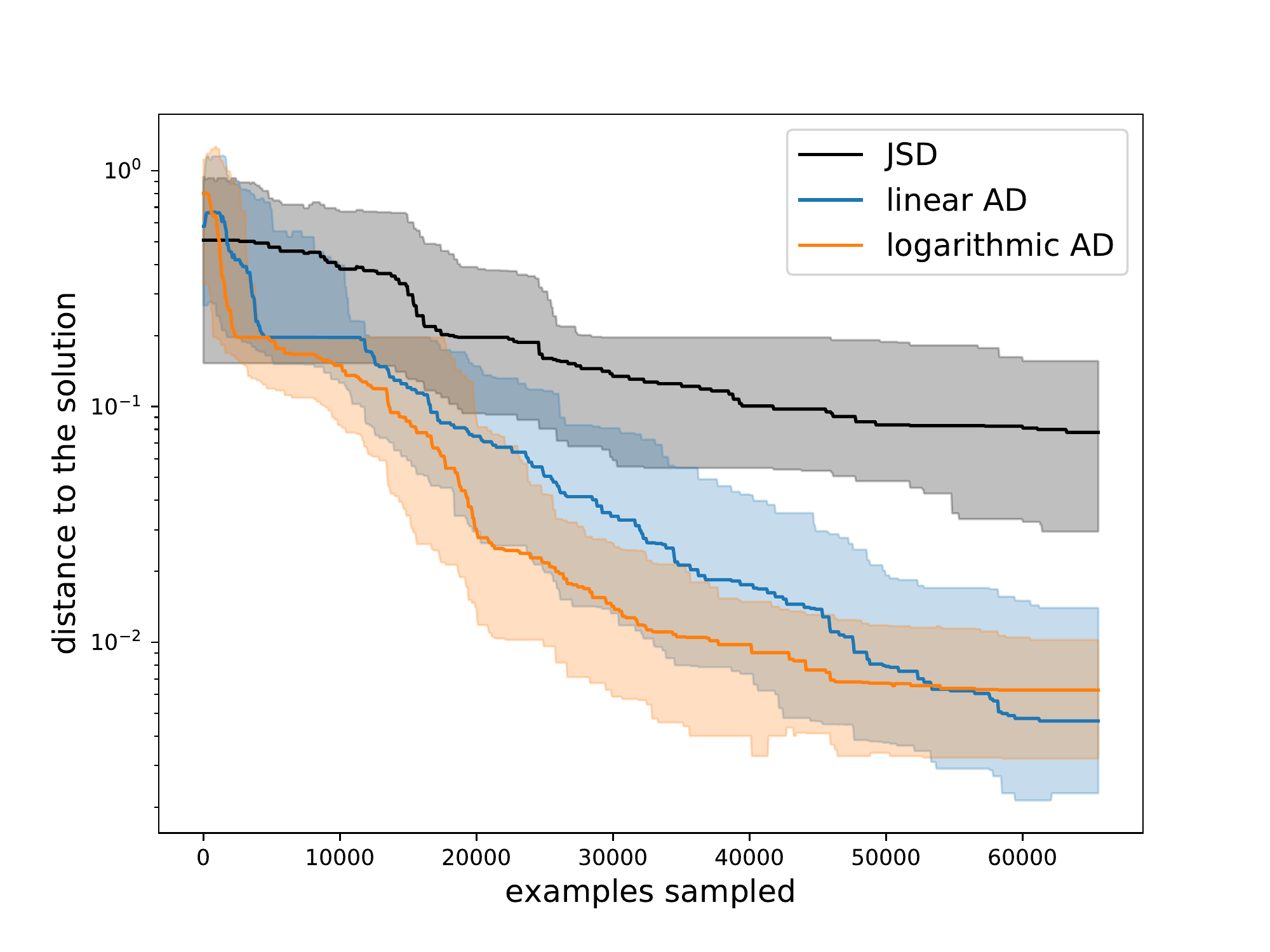}
    \caption{XOR-like synthetic example, Gradient Boosted Decision Trees. Convergence of Bayesian Optimization on: Jensen-Shannon divergence (marked as JSD), adaptive divergences with a linear capacity function (marked as linear AD), and a logarithmic capacity function (logarithmic AD). Each experiment was repeated 100 times; curves are interpolated, median curves are shown as solid lines, bands indicate first and third quartiles.}
    \label{fig:xor-ABO}
\end{figure}

Convergence of the considered divergences is shown in Fig.~\ref{fig:xor-ABO}. As can be seen from the results, given the same budget, both variants of adaptive divergence converge on parameters around an order of magnitude closer to the optimum than traditional JSD. Notice, that initial rapid progress slows as optimizer approaches the optimum, and the slope of the curves becomes similar to that of JSD: this can be explained by AD approaching JSD as probed distributions become less distinguishable from the ground-truth one.

\subsection{Pythia hyper-parameter tuning}
This task is introduced by~\cite{ilten2017event} and involves tuning hyper-parameters of the Pythia event generator, a high-energy particle collision simulation used at CERN. For this task, electron-positron collision are simulated at a center-of-mass energy 91.2 GeV, the nominal parameters of Pythia generator are set to the values of the Monash tune~\citep{monash}. Various physics-motivated statistics of events are used as observables, with a total of more than 400 features. The same statistics were originally used to obtain the Monash tune. For the purposes of this work, we consider one hyper-parameter, namely alphaSValue, with the nominal value of $0.1365$ and search range $[0.06, 0.25]$. 

We repeat settings of the experiment described by~\cite{ilten2017event}, with the only difference, that observed statistics are collected on the per-event basis instead of aggregating them over multiple events.  We employ Gradient Boosting over Oblivious Decision Trees (CatBoost implementation~\citep{prokhorenkova2018catboost}) with 100 trees of depth 3 and other parameters set to their default values. We use Gaussian Process Bayesian Optimization with Matern kernel ($\nu = 3 / 2$ and scaling from $[10^{-3}, 10^{3}]$ automatically adjusted by Maximum Likelihood fit) as optimizer. Comparison of unmodified Jensen-Shannon divergence with adaptive divergences with linear and logarithmic capacity functions (defined by Equations~\eqref{eq:gbdt-lincapacity}~and~\eqref{eq:gbdt-logcapacity} and $c_0~=~1/4$) presented on Fig.~\ref{fig:tunemc}.

\begin{figure}[h]
    \centering
    \includegraphics[width=0.85\textwidth]{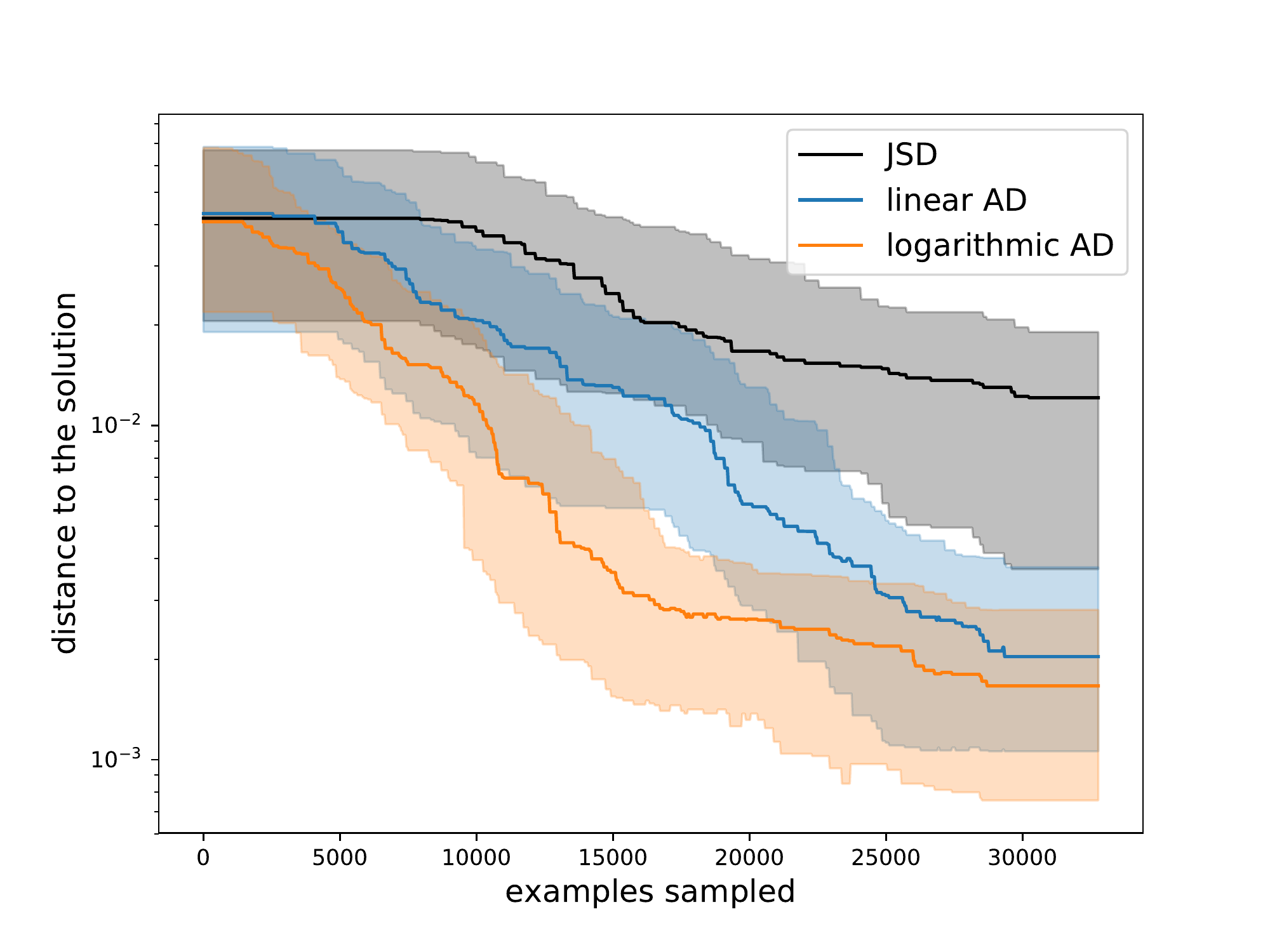}
    \caption{Pythia hyper-parameter tuning, CatBoost. Convergence of Bayesian Optimization on: Jensen-Shannon divergence (marked as JSD), adaptive divergences with a linear capacity function (marked as linear AD) and a logarithmic capacity function (logarithmic AD). Each experiment was repeated 100 times, curves are interpolated, median curves are shown as solid lines, bands indicate 25th and 75th percentiles.}
    \label{fig:tunemc}
\end{figure}

Results indicate that, given the same budget, Bayesian Optimization over adaptive divergences yields solutions about an order of magnitude closer to the nominal value than Jensen-Shannon divergence. Additionally, notice that the slope of the convergence curves for AD gradually approaches that of AD as the proposal distributions become closer to the ground-truth one.

\subsection{Pythia-alignment}
\label{subsec:alignment}
In order to test the performance of adaptive divergences with Adversarial Variational Optimization, we repeat the Pythia-alignment experiment suggested by~\cite{avo}. The settings of this experiment are similar to the previous one. In this experiment, however, we consider a detector represented by a $32\times32$ spherical grid with cells uniformly distributed in pseudorapidity $\nu \in [-5, 5]$ and azimuthal angle $\phi \in [-\pi, \pi]$ space. Each cell of the detector records the energy of particles passing through it. The detector has 3 parameters: $x, y, z$-offsets of the detector center relative to the collision point, where $z$-axis is placed along the beam axis, the nominal offsets are zero, and the initial guess is $(0.75, 0.75, 0.75)$. Fig.~\ref{fig:pythia-alignment-examples} shows averaged detector responses for the example configurations and samples from each of these configurations.

\begin{figure}[ht]
    \centering
    \includegraphics[width=\textwidth]{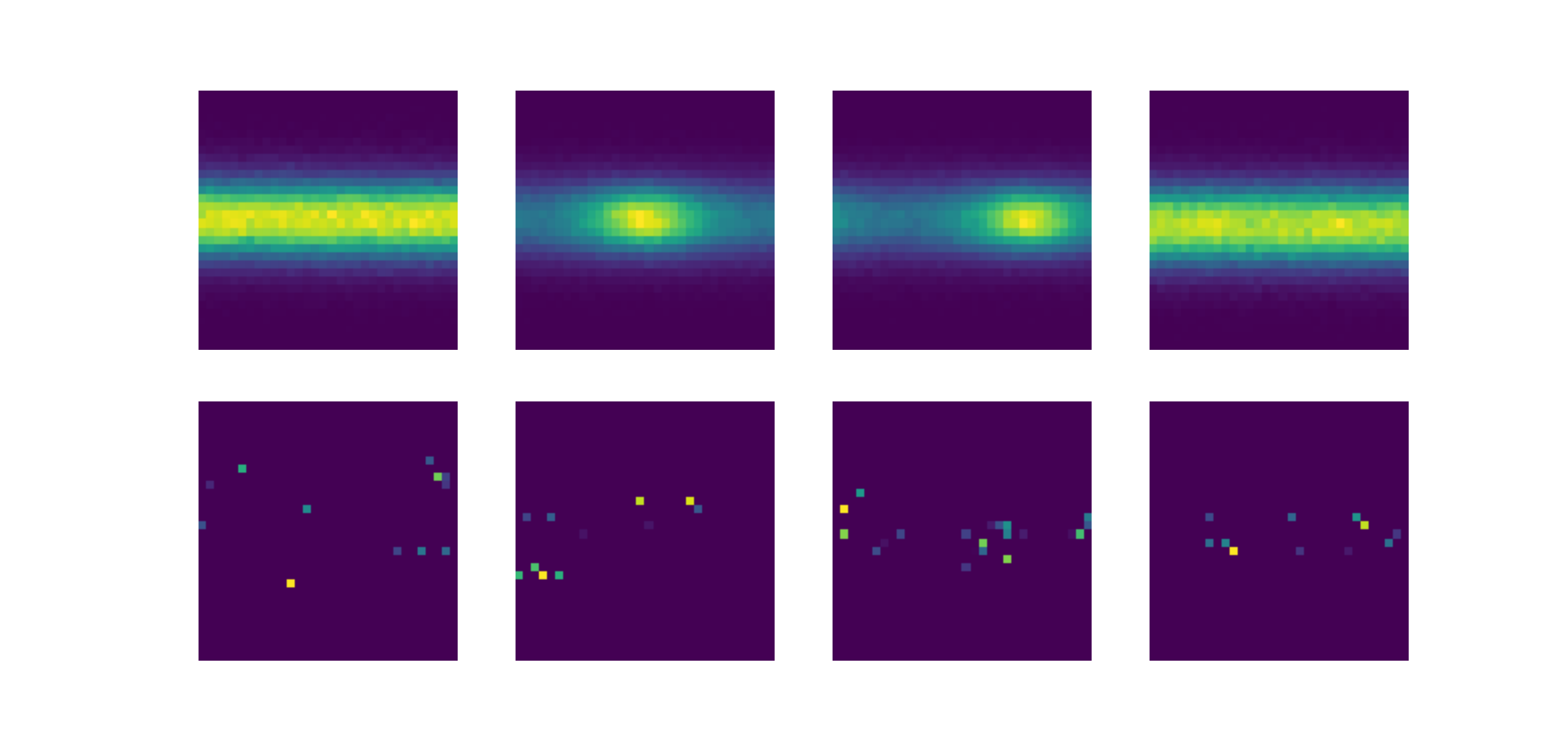}
    \caption{
        Illustration of the Pythia-alignment task. (Top row, from left to right) aggregated events for zero offset (the nominal configuration), 0.25 offset along $x$-axis, $y$-axis and $z$-axis. (Bottom row) single-event examples from the corresponding configurations above.
    }
    \label{fig:pythia-alignment-examples}
\end{figure}

For this task, a 1-hidden-layer Neural Network with 32 hidden units and ReLU activation function is employed. $R_1$ regularization, proposed by~\cite{mescheder2018training}, with the coefficient $10$, is used for the proposed divergences and the baseline. Adam optimization algorithm~\citep{kingma2014adam} with learning rate $10^{-2}$ is used to perform updates of the search distribution. We compare the performance of two variants of adaptive divergence (dropout and $l_2$ regularization) described in Section~\ref{sec:impl}.

\begin{figure}[ht]
    \centering
    \includegraphics[width=0.85\textwidth]{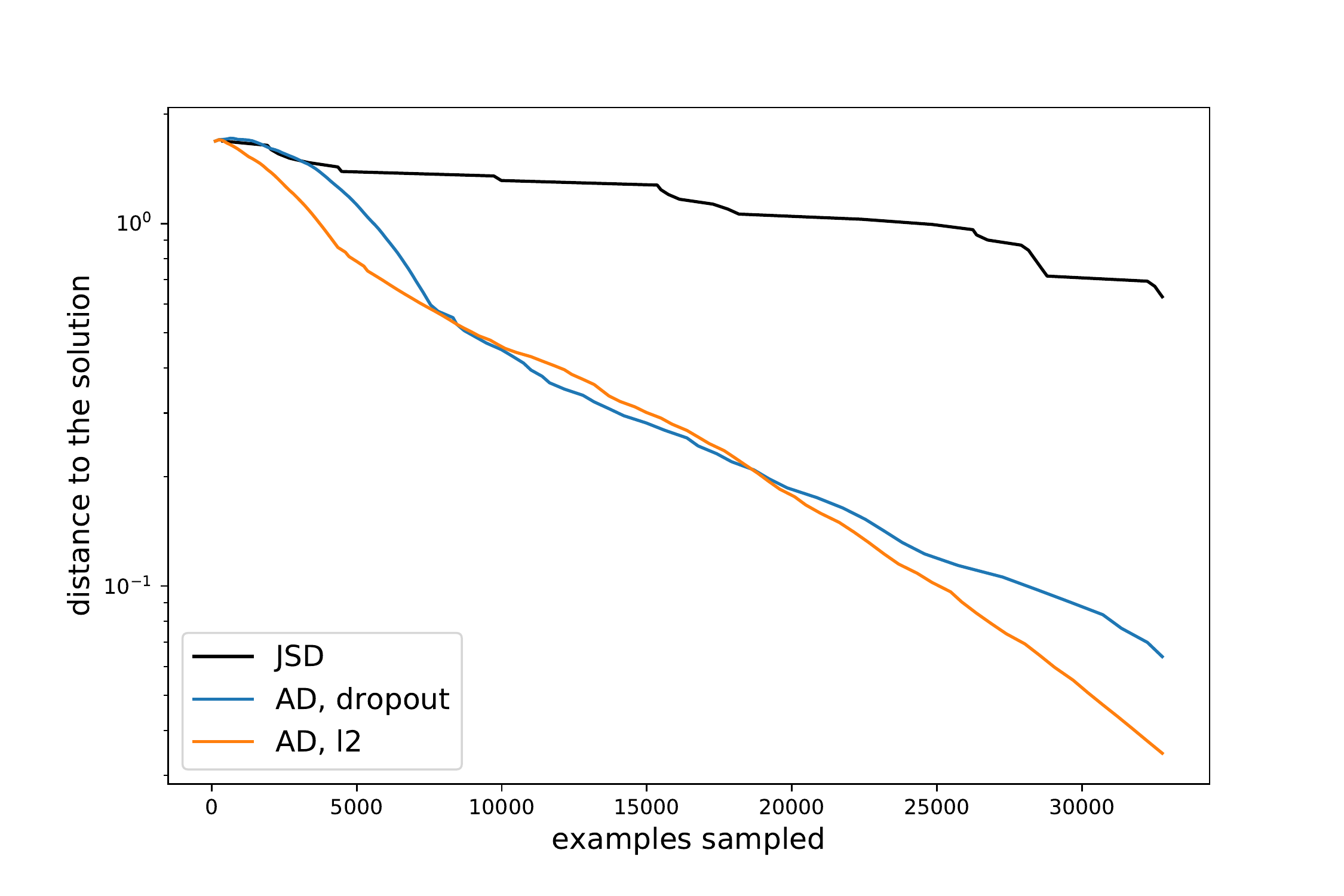}
    \caption{Pythia-alignment, Neural Networks. Convergence of Adversarial Variational Optimization on:
    adaptive divergence produced by $l_2$ regularization (AD, $l_2$), dropout regularization (AD, dropout), and the baseline divergence with constant $R_1$ regularization (marked as JSD).}
    \label{fig:pythia-alignment}
\end{figure}

Results are shown in Fig.~\ref{fig:pythia-alignment}. Given the same budget, both variants of adaptive divergence show a significant acceleration in contrast to the baseline divergence with only $R_1$ regularization. Note that the acceleration is even more pronounced in comparison to JSD estimated by an unregularized network: in our experiments, within the budget, we have not been able to consistently achieve the set level of agreement between train and test losses with the unregularized network.

\section{Discussion}
To the best knowledge of the authors, this work is the first one that explicitly addresses computational costs of Adversarial Optimization for expensive generators. Interestingly, several recent developments, like Progressive~GAN~\citep{karras2017progressive} and ChainGAN~\citep{hossain2018chaingan}, use multiple discriminators of increasing capacity; however, this is done mainly to compensate for the growing capacity of the generators and, probably, not for reducing computational cost.

Several recent papers propose improving stability of Adversarial Optimization by employing divergences other than Jensen-Shannon~\citep{gulrajani2017improved, arjovsky2017wasserstein, bellemare2017cramer}. Note that all results in this paper also hold for any divergence that can be formulated as an optimization problem, including Wasserstein~\citep{arjovsky2017wasserstein} and Cramer~\citep{bellemare2017cramer} distances. It can be demonstrated by adjusting Definition~2 and repeating the proof of Theorem~1 for a new divergence; presented algorithms also require only minor adjustments.

Multiple works introduce regularization~\citep{sonderby2016amortised, arjovsky2017principled, roth2017stabilizing, kodali2017convergence, mescheder2018training} for improving stability and convergence of Adversarial Optimization. Most of the standard regularization methods can be used to regulate model capacity in an adaptive divergence. Also, one can use these regularization methods in addition to adaptive divergence as any discriminator-based regularization effectively produces a new type of divergence. Pythia-alignment experiment (section~\ref{subsec:alignment}) demonstrates it clearly, where we use $R_1$ regularization with constant coefficient in addition to varying-strength dropout and $l_2$ regularization.

Finally, we would like to stress that the properties of adaptive divergences highly depend on the underlying families of pseudo-divergences, and interaction between AD and various proposed regularization schemes is subject to future research.

\section{Conclusion}

In this work, we introduce adaptive divergences, a family of divergences meant as an alternative to Jensen-Shannon divergence for Adversarial Optimization. Adaptive divergences generally require smaller sample sizes for estimation, which allows for a significant acceleration of Adversarial Optimization algorithms. These benefits were demonstrated on two fine-tuning problems involving Pythia event generator and two of the most popular black-box optimization algorithms: Bayesian Optimization and Variational Optimization. Experiments show that, given the same budget, adaptive divergences yield results up to an order of magnitude closer to the optimum than Jensen-Shannon divergence. Note, that while we consider physics-related simulations, adaptive divergences can be applied to any stochastic simulation.

Theoretical results presented in this work also hold for divergences other than Jensen-Shannon divergence.

\section*{Acknowledgments}
We wish to thank Mikhail Hushchyn, Denis Derkach and Marceline Ivanovna for useful discussions and suggestions on the text.

\section*{Funding Statement}
The research was carried out with the financial support of the Ministry of Science and Higher Education of Russian Federation within the framework of the Federal Target Program Research and Development in Priority Areas of the Development of the Scientific and Technological Complex of Russia for 2014-2020. Unique identifier RFMEFI58117X0023, agreement 14.581.21.0023 on 03.10.2017.

\clearpage
\bibliography{bibliography}

\appendix

\section{Proof of Theorem~1}
\label{sec:proof}

\begin{theorem}
    \label{th:divergence}
    If $\mathrm{AD}_\mathcal{D}$ is an adaptive divergence produced by a complete and ordered with respect to Jensen-Shannon divergence family of pseudo-divergences $\mathcal{D}$, then for any two distributions $P$ and $Q$: $\mathrm{JSD}(P, Q) = 0$ if and only if  $\mathrm{AD}(P, Q) = 0$.
\end{theorem}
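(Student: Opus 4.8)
The plan is to prove the two implications separately, working directly from Definition~\ref{def:ad} rather than from the simplified ``root'' form, since the family $\mathcal{D}$ is not assumed to depend continuously on $\alpha$. Fix $P,Q\in\Pi(\mathcal{X})$ and set
\[
  S=\{\alpha\in[0,1]\mid D_\alpha(P,Q)\geq(1-\alpha)\log 2\},
\]
so that $\mathrm{AD}_\mathcal{D}(P,Q)=\inf_{\alpha\in S}D_\alpha(P,Q)$. The first observation, used in both directions, is that $S$ is never empty: by (D2), $D_1(P,Q)=\mathrm{JSD}(P,Q)$, and since $D_1$ is a pseudo-divergence by (D0), property (P1) gives $D_1(P,Q)\geq 0=(1-1)\log 2$, hence $1\in S$. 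Consequently $\mathrm{AD}_\mathcal{D}(P,Q)\leq D_1(P,Q)=\mathrm{JSD}(P,Q)$, and simultaneously $\mathrm{AD}_\mathcal{D}(P,Q)\geq 0$ because each $D_\alpha(P,Q)\geq 0$ by (P1).

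The implication $\mathrm{JSD}(P,Q)=0\Rightarrow\mathrm{AD}(P,Q)=0$ then follows immediately from the two inequalities just recorded: $0\leq\mathrm{AD}_\mathcal{D}(P,Q)\leq\mathrm{JSD}(P,Q)=0$.

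For the converse, $\mathrm{AD}(P,Q)=0\Rightarrow\mathrm{JSD}(P,Q)=0$, the key step is to show that $\mathrm{AD}_\mathcal{D}(P,Q)=0$ forces $S=\{1\}$. Suppose toward a contradiction that some $\beta\in S$ has $\beta<1$. I would then argue that $D_\alpha(P,Q)\geq(1-\beta)\log 2$ for \emph{every} $\alpha\in S$ by a case split: if $\alpha\geq\beta$, monotonicity (D1) gives $D_\alpha(P,Q)\geq D_\beta(P,Q)\geq(1-\beta)\log 2$, the last step because $\beta\in S$; if $\alpha<\beta$, then $\alpha\in S$ gives $D_\alpha(P,Q)\geq(1-\alpha)\log 2>(1-\beta)\log 2$ since $\log 2>0$. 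Taking the infimum over $\alpha\in S$ yields $\mathrm{AD}_\mathcal{D}(P,Q)\geq(1-\beta)\log 2>0$, contradicting the assumption. Hence no $\beta<1$ belongs to $S$, so $S=\{1\}$, and therefore $\mathrm{AD}_\mathcal{D}(P,Q)=D_1(P,Q)=\mathrm{JSD}(P,Q)$ by (D2); as the left-hand side is $0$, so is $\mathrm{JSD}(P,Q)$.

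The only delicate point is precisely the lack of continuity of $\alpha\mapsto D_\alpha(P,Q)$: one cannot simply evaluate $D_\alpha$ at a solution of $D_\alpha(P,Q)=(1-\alpha)\log 2$, so the whole argument is phrased in terms of the set $S$ and uses only (P1) (non-negativity), (D1) (monotonicity), (D2) (completeness), and the defining inequality of $S$. I expect the main things to get right are the case split in the key step and the unconditional membership $1\in S$, which is what keeps the infimum over a nonempty set and bounded above by $\mathrm{JSD}(P,Q)$.
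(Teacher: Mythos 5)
Your proof is correct, and it follows the same basic strategy as the paper's (both directions are handled via the set of ``active'' capacities $S=\{\alpha \mid D_\alpha(P,Q)\geq(1-\alpha)\log 2\}$, monotonicity \ref{prop:family-ordered}, and a lower bound of the form $(1-\beta)\log 2$), but your execution is tighter in two respects. For the forward direction, the paper shows that $\mathrm{JSD}(P,Q)=0$ forces every $D_\alpha(P,Q)$ to vanish and then evaluates the infimum, whereas you simply note $1\in S$, giving $0\leq\mathrm{AD}_\mathcal{D}(P,Q)\leq D_1(P,Q)=\mathrm{JSD}(P,Q)$ directly. For the converse, the paper first characterizes the active set as $\{1\}$, $[\beta,1]$, or $(\beta,1]$ and treats these cases, asserting in the interval cases that $D_\alpha(P,Q)\geq(1-\beta)\log 2$ ``by the definition'' of the active set --- a step that, in the half-open case, actually needs the monotonicity argument spelled out (for $\alpha\in(\beta,1]$ one must pass through some $\gamma\in(\beta,\alpha]$ in $S$ and let $\gamma\to\beta$). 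You avoid this entirely by taking an arbitrary $\beta\in S$ with $\beta<1$ (rather than the infimum of $S$) and splitting on $\alpha\geq\beta$ versus $\alpha<\beta$, which yields $\mathrm{AD}_\mathcal{D}(P,Q)\geq(1-\beta)\log 2>0$ without any structural claim about $S$ or any attainment issue. The conclusions agree: either way, $\mathrm{AD}_\mathcal{D}(P,Q)=0$ forces $S=\{1\}$ and hence $\mathrm{JSD}(P,Q)=\mathrm{AD}_\mathcal{D}(P,Q)=0$; your version is the more economical write-up of the same idea.
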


\begin{proof}
    For convenience, we repeat the definition of an adaptive divergence $\mathrm{AD}_\mathcal{D}$ here:
    \begin{equation}
        \mathrm{AD}_\mathcal{D}(P, Q) = \inf \left\{ D_\alpha(P, Q) \mid \mathrm{D}_\alpha(P, Q) \geq (1 - \alpha) \log 2\right\}.
    \end{equation}

    Firstly, we prove that from $\mathrm{JSD}(P, Q) = 0$ follows $\mathrm{AD}_\mathcal{D}(P, Q) = 0$.
    Due to Property~\ref{prop:family-complete}, $D_1(P, Q) = \mathrm{JSD}(P, Q) = 0$, therefore, $\forall \alpha \in [0, 1]: D_\alpha(P, Q) = 0$ due to Properties~\ref{prop:family-complete} (pseudo-divergences form a non-decreasing sequence) and~\ref{prop:pd-non-negativity} (non-negativity of pseudo-divergences), which, in turn, implies that $\mathrm{AD}(P, Q) = \inf \{ 0 \} = 0$.
    
    Secondly, we prove that from $\mathrm{AD}_\mathcal{D}(P, Q) = 0$ follows $\mathrm{JSD}(P, Q) = 0$. Let's assume that, for some $P$ and $Q$, $\mathrm{AD}(P, Q) = 0$, but $\mathrm{JSD}(P, Q) = C > 0$.
    Let us define the set of active capacities $\mathrm{A}_\mathcal{D}(P, Q)$ as follows:
    \begin{equation}
        \mathrm{A}_\mathcal{D}(P, Q) = \left\{ \alpha \mid \mathrm{D}_\alpha(P, Q) \geq (1 - \alpha) \log 2\right\}.
    \end{equation}
    Note, that for every proper family $\mathcal{D}$ and for every pair of $P$ and $Q$: $\{1\} \subseteq \mathrm{A}_\mathcal{D}(P, Q)$ and,
    if $\alpha \in \mathrm{A}_\mathcal{D}(P, Q)$ then $[\alpha, 1] \subseteq \mathrm{A}_\mathcal{D}(P, Q)$. The latter follows from Property~\ref{prop:family-ordered} (pseudo-divergences form a non-decreasing sequence) and the fact, that $(1 - \alpha)\log 2$ is a strictly decreasing function.
    
    The previous statement implies that there are three possible forms of $\mathrm{A}_\mathcal{D}(P, Q)$:
    \begin{enumerate}
        \item a single point: $\mathrm{A}_\mathcal{D}(P, Q) = \{1\}$;
        \item an interval: $\mathrm{A}_\mathcal{D}(P, Q) = [\beta, 1]$;
        \item a half-open interval: $\mathrm{A}_\mathcal{D}(P, Q) = (\beta, 1]$;
    \end{enumerate}
    for some $\beta \in [0, 1)$. The first case would contradict our assumptions, since $\mathrm{AD}_\mathcal{D}(P, Q) = \inf \{ D_1(P, Q) \} = C > 0$.
    To address the last two cases, note, that $\forall \alpha \in \mathrm{A}_\mathcal{D}(P, Q): D_\alpha(P, Q) \geq (1 - \beta) \log 2 > 0$ due to the definition of $\mathrm{A}_\mathcal{D}(P, Q)$. However, this implies that $\mathrm{AD}_\mathcal{D}(P, Q) = \inf \{ D_\alpha(P, Q) \mid \alpha \in \mathrm{A}_\mathcal{D}(P, Q) \} \geq (1 - \beta) \log 2 > 0$, which contradicts our assumptions.
    
    From the statements above, we can conclude that if $\mathrm{AD}_\mathcal{D}(P, Q) = 0$, then $\mathrm{JSD}(P, Q) = 0$.
    Combined with the previouly proven $\left(\mathrm{JSD}(P, Q) = 0 \right)\Rightarrow \left(\mathrm{AD}_\mathcal{D}(P, Q) = 0\right)$, this finishes the proof.
\end{proof}

\section{Formal definitions and proofs}
\label{sec:app-def}

\begin{definition}
    A model family $\mathcal{M} = \{ M_\alpha \subseteq \mathcal{F} \mid \alpha \in [0, 1] \}$ is complete and nested, if:
    \begin{description}
        \item[\namedlabel{prop:nested-zero}{(N0)}] $(x \mapsto 1 / 2) \in M_0$;
        \item[\namedlabel{prop:nested-complete}{(N1)}] $M_1 = \mathcal{F}$;
        \item[\namedlabel{prop:nested-ordered}{(N2)}] $\forall \alpha, \beta \in [0, 1]: (\alpha < \beta) \Rightarrow (M_{\alpha} \subset M_{\beta})$.
    \end{description}
\end{definition}

\begin{theorem}
    If a model family $\mathcal{M} = \{ M_\alpha \subseteq \mathcal{F} \mid \alpha \in [0, 1] \}$ is complete and nested, then the family $\mathcal{D} = \{ D_\alpha : \Pi(\mathcal{X}) \times \Pi(\mathcal{X}) \to \mathbb{R} \mid \alpha \in [0, 1] \}$, where:
    \begin{equation}
        D_\alpha(P, Q) = \log 2 - \inf_{f \in M_\alpha} L(f, P, Q), \label{eq:ad-set}
    \end{equation}
    is a complete and ordered with respect to Jensen-Shannon divergence family of pseudo-divergences.
\end{theorem}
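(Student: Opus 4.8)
The plan is to verify directly the three conditions \ref{prop:family-pd}, \ref{prop:family-ordered} and \ref{prop:family-complete} of Definition~\ref{def:family}, extracting each from the matching hypothesis on $\mathcal{M}$. Two elementary facts will do most of the work: first, the constant discriminator $x \mapsto \tfrac12$ satisfies $L(x\mapsto\tfrac12, P, Q) = \log 2$ for every pair $P, Q$; second, the scalar inequality $-\tfrac12\log t - \tfrac12\log(1-t) \ge \log 2$ holds for all $t \in (0,1)$, with equality exactly at $t = \tfrac12$.

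For \ref{prop:family-pd} I would check the two clauses of Definition~\ref{def:pseudo-divergence} for a fixed $\alpha$. Non-negativity \ref{prop:pd-non-negativity}: by \ref{prop:nested-zero} and \ref{prop:nested-ordered} the map $x\mapsto\tfrac12$ lies in $M_0 \subseteq M_\alpha$, so $\inf_{f\in M_\alpha} L(f,P,Q) \le \log 2$ and hence $D_\alpha(P,Q) \ge 0$. Clause \ref{prop:pd-equality}: if $P = Q$, then for every $f$, $L(f,P,P) = \E_{x\sim P}\big[-\tfrac12\log f(x) - \tfrac12\log(1-f(x))\big] \ge \log 2$ by the scalar inequality, so $\inf_{f\in M_\alpha} L(f,P,P) \ge \log 2$; together with the reverse bound just obtained this forces the infimum to equal $\log 2$, i.e. $D_\alpha(P,P) = 0$.

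Property \ref{prop:family-ordered} is immediate from \ref{prop:nested-ordered}: for $\alpha_1 < \alpha_2$ we have $M_{\alpha_1}\subset M_{\alpha_2}$, an infimum over a larger set is no larger, so $\inf_{f\in M_{\alpha_1}} L(f,P,Q) \ge \inf_{f\in M_{\alpha_2}} L(f,P,Q)$, and subtracting from $\log 2$ gives $D_{\alpha_1}(P,Q) \le D_{\alpha_2}(P,Q)$. For \ref{prop:family-complete}, \ref{prop:nested-complete} gives $M_1 = \mathcal{F}$, so $D_1(P,Q) = \log 2 - \inf_{f\in\mathcal{F}} L(f,P,Q)$, and the infimum is attained at the Bayes-optimal discriminator $f^*(x) = P(x)/(P(x)+Q(x))$ --- the same minimiser that appears in~\eqref{eq:jsd} --- whence $D_1(P,Q) = \mathrm{JSD}(P,Q)$.

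I do not expect a real obstacle here: the statement is essentially bookkeeping against the definitions. The only two places that need a little care are the mismatch between the infimum in~\eqref{eq:ad-set} and the minimum written in~\eqref{eq:jsd} (harmless, since the Bayes discriminator realises it), and, in the proof of \ref{prop:pd-equality}, a careful measure-theoretic reading of ``$P = Q$'' so that the pointwise cross-entropy bound can legitimately be integrated --- most cleanly phrased through a common dominating measure and Radon--Nikodym derivatives. Neither affects the logic of the argument.
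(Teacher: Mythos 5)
Your proposal is correct and follows essentially the same route as the paper's own proof: verify (D0) using the constant discriminator $x \mapsto \tfrac12$ contained in every $M_\alpha$, obtain (D1) directly from the nesting of the model sets, and get (D2) from $M_1 = \mathcal{F}$; your extra details (the pointwise cross-entropy bound for $P=Q$ and the Bayes-optimal minimiser reconciling the infimum with the minimum in the JSD formula) only make explicit what the paper leaves implicit. One small remark: your inequality direction in the ordering step is the right one, whereas the paper's displayed inequality for this step has its sign reversed, evidently a typo.
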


\begin{proof}
Let's introduce function $f_0(x) = 1 / 2$. Now we prove the theorem by proving that the family satisfies all properties from Definition~\ref{def:family}.

\paragraph{Property~\ref{prop:family-pd}} Due to Properties~\ref{prop:nested-zero}~and~\ref{prop:nested-ordered}, $f_0$ is a member of each set $M_\alpha$. This implies, that $D_\alpha(P, Q) \geq 0$ for all $\alpha \in [0, 1]$. For $P = Q$, cross-entropy loss function $L(f, P, Q)$ achieves its minimum in $f = f_0$, therefore, $D_\alpha(P, Q) = 0$ if $P = Q$ for all $\alpha \in [0, 1]$. Therefore, for each $\alpha \in [0, 1]$ $D_\alpha$ is a pseudo-divergence.

\paragraph{Property~\ref{prop:family-ordered}}
From Property~\ref{prop:nested-ordered} follows, that for all $0 \leq \alpha < \beta \leq 1$:
\begin{equation*}
    D_{\alpha}(P, Q) = \log 2 - \inf_{f \in M_\alpha} L(f, P, Q) \geq \log 2 - \inf_{f \in M_\beta} L(f, P, Q) = D_\beta(P, Q).
\end{equation*}

\paragraph{Property~\ref{prop:family-complete}}
This property is directly follows from Property~\ref{prop:nested-complete} and Equation~\eqref{eq:ad-set}.
\end{proof}

\begin{definition}
    If $M$ is a parameterized model family $M = \{f(\theta, \cdot) : \mathcal{X} \to [0, 1] \mid \theta \in \Theta \}$, then a function $R : \Theta \to \mathbb{R}$ is a proper regularizer for the family $M$ if:
    \begin{description}
        \item[\namedlabel{prop:reg-non-negativity}{(R1)}] $\forall \theta \in \Theta: R(\theta) \geq 0$;
        \item[\namedlabel{prop:reg-zero}{(R2)}] $\exists \theta_0 \in \Theta: \left(f(\theta, \cdot) \equiv \frac{1}{2} \right) \land \left( R(\theta) = 0 \right)$.
    \end{description}
\end{definition}

\begin{theorem}
    If $M$ is a parameterized model family: $M = \{ f(\theta, \cdot) \mid \theta \in \Theta \}$ and $M = \mathcal{F}$, $R : \Theta \to \mathbb{R}$ is a proper regularizer for $M$ , and $c: [0, 1] \to [0, +\infty)$ is a strictly increasing function such, that $c(0) = 0$, then the family $\mathcal{D} = \{ D_\alpha : \Pi(\mathcal{X}) \times \Pi(\mathcal{X}) \to \mathbb{R} \mid \alpha \in [0, 1] \}$:
    \begin{eqnarray*}
        D_\alpha(P, Q) &=& \log 2 - \min_{\theta \in \Theta_\alpha(P, Q)} L(f(\theta, \cdot), P, Q);\\
        \Theta_\alpha(P, Q) &=&  \Argmin_{\theta \in \Theta} L^R_\alpha(\theta, P, Q);\\
        L^R_\alpha(\theta, P, Q) &=& L(f(\theta, \cdot), P, Q) + c(1 - \alpha) R(\theta);
    \end{eqnarray*}
    is a complete and ordered with respect to Jensen-Shannon divergence family of pseudo-divergences.
\end{theorem}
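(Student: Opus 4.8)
The plan is to verify directly the three conditions of Definition~\ref{def:family}, mirroring the proof just given for nested model families. Throughout fix a pair of distributions $P, Q$, write $\ell(\theta) = L(f(\theta,\cdot), P, Q)$ for brevity, and use Property~\ref{prop:reg-zero} to fix a parameter $\theta_0$ with $f(\theta_0,\cdot) \equiv \tfrac{1}{2}$ and $R(\theta_0) = 0$. The single computation underlying everything is that the constant discriminator $\tfrac{1}{2}$ has cross-entropy exactly $\log 2$, so $\ell(\theta_0) = \log 2$ and $L^R_\alpha(\theta_0, P, Q) = \log 2$ for every $\alpha \in [0,1]$ (here I use $c \ge 0$, from the hypothesis $c : [0,1] \to [0,+\infty)$).

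First I would establish Property~\ref{prop:family-pd}, i.e. that each $D_\alpha$ is a pseudo-divergence. For non-negativity~\ref{prop:pd-non-negativity}: any $\theta \in \Theta_\alpha(P,Q)$ satisfies $L^R_\alpha(\theta, P, Q) \le L^R_\alpha(\theta_0, P, Q) = \log 2$, and since $c(1-\alpha)R(\theta) \ge 0$ by Property~\ref{prop:reg-non-negativity}, this forces $\ell(\theta) \le \log 2$, hence $D_\alpha(P,Q) = \log 2 - \min_{\theta \in \Theta_\alpha} \ell(\theta) \ge 0$. For~\ref{prop:pd-equality}: when $P = Q$, the pointwise bound $\log t + \log(1-t) \le -2\log 2$ for $t \in (0,1)$ gives $\ell(\theta) = L(f(\theta,\cdot),P,P) \ge \log 2$ for all $\theta$, so $\theta_0$ minimises $L^R_\alpha(\cdot, P, P)$ with value $\log 2$; thus $\Theta_\alpha(P,P) \ne \varnothing$ and, for every $\theta \in \Theta_\alpha(P,P)$, the equality $\ell(\theta) + c(1-\alpha)R(\theta) = \log 2$ together with $\ell(\theta) \ge \log 2$ and $c(1-\alpha)R(\theta) \ge 0$ forces $\ell(\theta) = \log 2$, so $D_\alpha(P,P) = 0$.

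Next I would prove the ordering Property~\ref{prop:family-ordered} by the standard regularisation-path argument. Fix $\alpha_1 < \alpha_2$ and put $\lambda_i = c(1-\alpha_i)$, so that $\lambda_1 > \lambda_2 \ge 0$ because $c$ is strictly increasing. For any $\theta_1 \in \Theta_{\alpha_1}(P,Q)$ and $\theta_2 \in \Theta_{\alpha_2}(P,Q)$, adding the two optimality inequalities $\ell(\theta_1) + \lambda_1 R(\theta_1) \le \ell(\theta_2) + \lambda_1 R(\theta_2)$ and $\ell(\theta_2) + \lambda_2 R(\theta_2) \le \ell(\theta_1) + \lambda_2 R(\theta_1)$ yields $(\lambda_1 - \lambda_2)\bigl(R(\theta_1) - R(\theta_2)\bigr) \le 0$, hence $R(\theta_1) \le R(\theta_2)$; substituting this back into the second inequality gives $\ell(\theta_2) \le \ell(\theta_1)$. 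Since this holds for every such pair, $\min_{\theta \in \Theta_{\alpha_1}} \ell(\theta) \ge \min_{\theta \in \Theta_{\alpha_2}} \ell(\theta)$, i.e. $D_{\alpha_1}(P,Q) \le D_{\alpha_2}(P,Q)$. Property~\ref{prop:family-complete} is then immediate: at $\alpha = 1$ we have $c(1-1) = c(0) = 0$, so $L^R_1(\cdot, P, Q) = \ell(\cdot)$ and $\Theta_1(P,Q) = \Argmin_\theta \ell(\theta)$; since $M = \mathcal{F}$, $D_1(P,Q) = \log 2 - \min_{f \in \mathcal{F}} L(f, P, Q) = \mathrm{JSD}(P, Q)$ by~\eqref{eq:jsd}.

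The inequalities above are all routine; the step I would be most careful about is a regularity issue rather than an estimate — the statement writes $\min$ and $\Argmin$, which presupposes that $L^R_\alpha(\cdot, P, Q)$ attains its minimum and that $\ell$ attains its minimum over each nonempty $\Theta_\alpha(P,Q)$. Nonemptiness of $\Theta_\alpha(P,P)$ came for free above (witnessed by $\theta_0$), but for general $Q$ it has to be assumed, as the theorem statement implicitly does. If one prefers to drop the assumption, every step goes through with infima in place of minima, the only slightly delicate point being the ordering argument, where one compares $\inf_{\Theta_{\alpha_1}} \ell$ with $\inf_{\Theta_{\alpha_2}} \ell$ through sequences of near-minimisers rather than exact ones.
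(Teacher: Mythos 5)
Your proposal is correct and follows essentially the same route as the paper's proof: establish non-negativity via the constant-$\tfrac12$ parameter $\theta_0$ with $R(\theta_0)=0$, handle $P=Q$ via $L\ge\log 2$, prove the ordering by comparing the optimality conditions of minimizers at the two regularization strengths to deduce monotonicity of $R$ and hence of the loss, and get completeness from $c(0)=0$ and $M=\mathcal{F}$. The only differences are cosmetic — you argue the ordering directly by adding both optimality inequalities, where the paper argues by contradiction, and you add a (reasonable) remark on attainment of the minima that the paper leaves implicit.
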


\begin{proof}
We prove the theorem by showing that the family $\mathcal{D}$ satisfies all properties from Definition~\ref{def:family}.

\paragraph{Property~\ref{prop:family-pd}}
Due to Property~\ref{prop:reg-zero}, there exists such $\theta_0$, that $f(\theta_0, \cdot) \equiv 1 / 2$ and $R(\theta_0) = 0$. Notice, that, for all $P$ and $Q$, $L^R_\alpha(\theta_0, P, Q) = \log 2$ and $L^R_\alpha(\theta, P, Q) \geq L(f(\theta, \cdot), P, Q)$, therefore, $D_\alpha(P, Q) \geq 0$ for all $P, Q \in \Pi(\mathcal{X})$ and for all $\alpha \in [0, 1]$. For the case $P = Q$, $\theta_0$ also delivers minimum to $L(f(\theta_0, \cdot), P, Q) + c(1 - \alpha) R(\theta_0)$, thus, $D_\alpha(P, Q) = 0$ if $P = Q$. This proves $D_\alpha$ to be a pseudo-divergence for all $\alpha \in [0, 1]$.

\paragraph{Property~\ref{prop:family-ordered}}
Let's assume that $0 \leq \alpha < \beta \leq 1$, yet, for some $P$ and $Q$, $D_\alpha(P, Q) > D_\beta(P, Q)$. The latter implies, that:
\begin{equation}
    \min_{\theta \in \Xi_\alpha} L(f(\theta, \cdot), P, Q) < \min_{\theta \in \Xi_\beta} L(f(\theta, \cdot), P, Q)\label{eq:reg-assumption};
\end{equation}
where: $\Xi_\alpha = \Theta_\alpha(P, Q)$ and $\Xi_\beta = \Theta_\beta(P, Q)$.
Let us pick some model parameters:
\begin{eqnarray*}
    \theta_\alpha &\in& \Argmin_{\theta \in \Xi_\alpha} L(f(\theta, \cdot), P, Q);\\
    \theta_\beta &\in& \Argmin_{\theta \in \Xi_\beta} L(f(\theta, \cdot), P, Q).
\end{eqnarray*}

Since $\theta_\beta \in \Xi_\beta$, then, by the definition of $\Theta_\beta(P, Q)$:
\begin{equation}
    L^R_\beta(\theta_\beta, P, Q) \leq L^R_\beta(\theta_\alpha, P, Q)\label{eq:reg-1}.
\end{equation}
From the latter and assumption~\eqref{eq:reg-assumption} follows, that $R(\theta_\beta) < R(\theta_\alpha)$. By the conditions of the theorem, $C = c(1 - \alpha) - c(1 - \beta) > 0$ and:
\begin{equation}
    C \cdot R(\theta_\beta) < C \cdot R(\theta_\alpha) \label{eq:reg-2}.
\end{equation}
Adding inequality~\eqref{eq:reg-1} to inequality~\eqref{eq:reg-2}:
\begin{equation*}
    L^R_\alpha(\theta_\beta, P, Q) < L^R_\alpha(\theta_\alpha, P, Q),
\end{equation*}
which contradicts the definition of $\theta_\alpha$. This, in turn, implies that the assumption~\eqref{eq:reg-assumption} contradicts conditions of the theorem.

\paragraph{Property~\ref{prop:family-complete}}
Since $c(0) = 0$ and $M = \mathcal{F}$, $D_1 = \mathrm{JSD}$ by the definition.
\end{proof}
\end{document}